\documentclass[journal]{IEEEtran}

\usepackage{amsmath,amsfonts}
\usepackage{algorithm}
\usepackage{algorithmicx}
\usepackage{algpseudocode}
\usepackage{array}
\usepackage{makecell}
\usepackage{minibox}
\usepackage{multirow}
\usepackage{subcaption}
\usepackage{booktabs}
\usepackage{multirow}
\usepackage{setspace}
\usepackage{textcomp}
\usepackage{stfloats}
\usepackage{capt-of}
\usepackage{url}
\usepackage{verbatim}
\usepackage{graphicx}
\usepackage{cite}
\usepackage{marvosym}
\usepackage{xcolor}
\usepackage{tcolorbox}
\usepackage{colortbl}
\usepackage{booktabs}
\usepackage{pifont}
\usepackage{amsthm}
\usepackage{utfsym} 
\usepackage{bbm}
\usepackage[pagebackref=false,breaklinks=true,letterpaper=true,colorlinks,linkcolor=blue,citecolor=blue,bookmarks=false]{hyperref}
\usepackage[resetlabels]{multibib}
\newcites{supp}{References}

\definecolor{Draft}{RGB}{0, 123, 0}
\definecolor{LightCyan}{rgb}{0.88,1,0.88}
\definecolor{beaublue}{rgb}{0.9, 0.95, 0.9}
\definecolor{blackish}{rgb}{0.2, 0.2, 0.2}
\definecolor{grayish}{rgb}{0.95, 0.95, 0.95}

\newtheorem{theorem}{Theorem}

\newtheorem{proposition}{Proposition}

\title{Two Sides of the Same Coin: Co-Evolving Search for Cross-Task Attacks on Vision-Language Models}

\author{
Xuanhui Lin,
Junhao Dong\textsuperscript{\Letter},
Mingrong Gong,
Yucheng Chen,
Xinghua Qu,
and Yew-Soon~Ong,~\IEEEmembership{Fellow,~IEEE}
\thanks{Xuanhui Lin, Junhao Dong, Mingrong Gong, and Yew-Soon Ong are with the College of Computing and Data Science, Nanyang Technological University, Singapore. Yew-Soon Ong and Junhao Dong are also with the Center for Frontier AI Research, Agency for Science, Technology and Research (A*STAR), Singapore (e-mail: \{N2502791J, GONG0116\}@e.ntu.edu.sg; \{junhao003, asysong\}@ntu.edu.sg).}
\thanks{Yucheng Chen is with the School of Electrical Engineering and
Automation, Fuzhou University (e-mail: 832302208@fzu.edu.cn).
Xinghua Qu is with ByteDance, Singapore
(e-mail: \mbox{quxinghua17@gmail.com}).}
\thanks{\Letter $\;\leftarrow$ corresponding author.}
}

\begin{document}

\maketitle

\begin{abstract}
Vision-language models (VLMs) exhibit strong generalization across multimodal tasks but remain vulnerable to adversarial perturbations. Existing attacks typically follow single-trajectory gradient optimization or task-specific objectives, limiting search-space exploration and cross-task transferability. We propose an evolutionary-computation-guided cross-modal attack framework for unified VLMs. The framework adaptively searches both textual and visual spaces. On the textual side, it evolves hard negative semantic embeddings around the source-category representation to provide diverse cross-modal repulsion. On the visual side, it maintains a population of object-region perturbations and combines momentum-based gradient updates with evolutionary selection, mutation, and crossover to more reliably explore multiple feasible trajectories. Jointly optimizing semantic negative guidance and localized perturbations generates adversarial examples that consistently shift source-object semantics toward target categories across vision-language tasks. Theoretical analyses show that the co-evolutionary search preserves perturbation feasibility, prevents degradation of the best observed fitness, and increases the probability of reaching high-margin adversarial regions compared with single-trajectory optimization. Experiments on Florence-2, OFA, and UnifiedIO-2 demonstrate strong overall attack performance across image captioning, object detection, region categorization, and object localization. Ablation studies further verify the complementary effectiveness of text-side semantic evolution and image-side perturbation evolution, as well as the framework’s efficiency and cross-task transferability.
\end{abstract}

\begin{IEEEkeywords}
Adversarial attack, evolutionary computation, vision-language models, cross-task transferability
\end{IEEEkeywords}

\section{Introduction}
Vision-Language Models (VLMs) have demonstrated remarkable capabilities across diverse vision-language tasks and application domains\cite{kuckreja2024geochat,cui2024survey,hou2025vision}. Despite their impressive progress, recent studies have revealed that VLMs are inherently susceptible to adversarial perturbations \cite{szegedy2013intriguing}, where imperceptible input modifications can mislead models into making highly confident incorrect predictions \cite{qi2024visual,kurakin2018adversarial}. This issue becomes even more pronounced in VLMs due to their multimodal inputs and increasingly complex architectures, which enlarge the potential attack surface. Such vulnerabilities pose substantial risks to the reliability and trustworthiness of VLMs in security-critical scenarios \cite{dong2025stabilizing,diaz2023connecting}.

\begin{figure}[!t]
\vspace{0.1cm}
\begin{subfigure}[t]{0.49\linewidth} 
    \centering
    \includegraphics[width=1\linewidth]{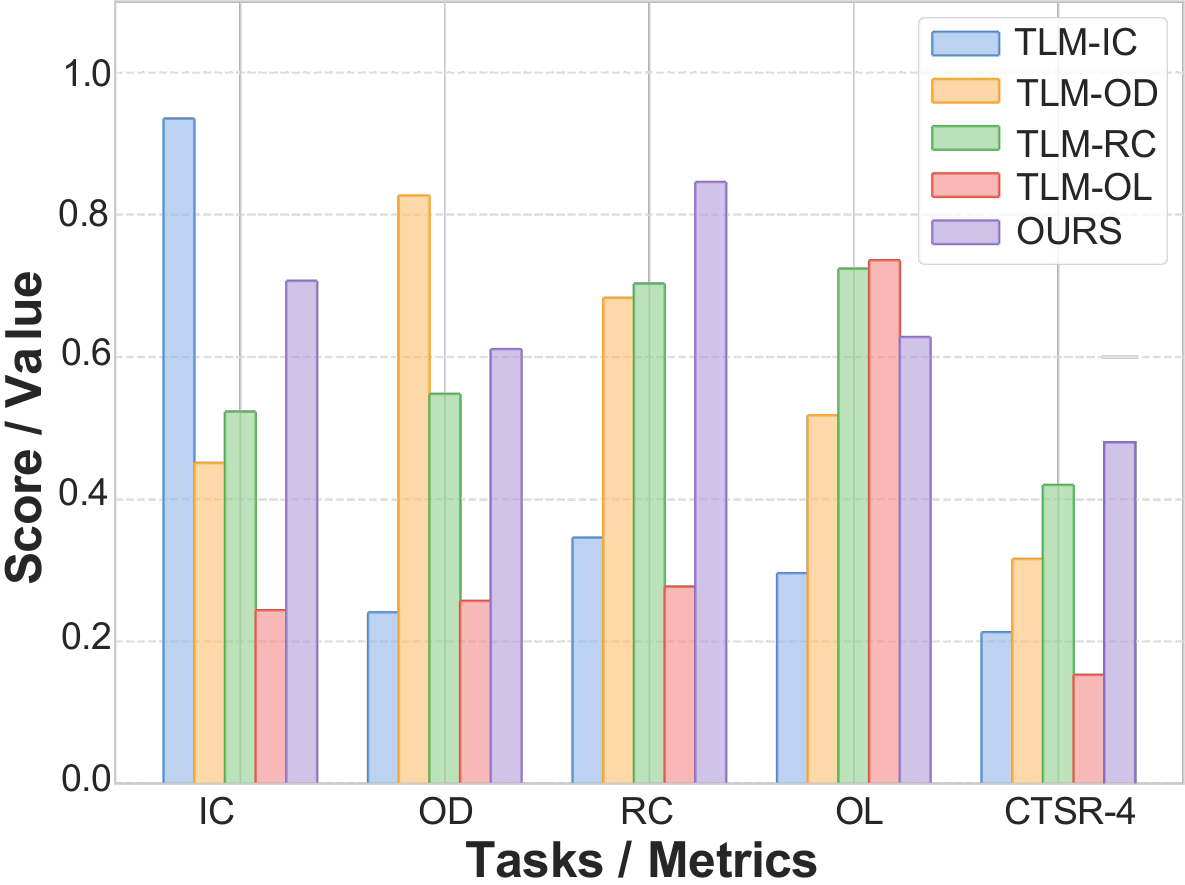}
    \vspace{-0.5cm}
    \caption{Cross-task transferability}
    \label{fig:Mehod_compare}
\end{subfigure}
\hfill
\begin{subfigure}[t]{0.49\linewidth} 
    \centering
    \includegraphics[width=1\linewidth]{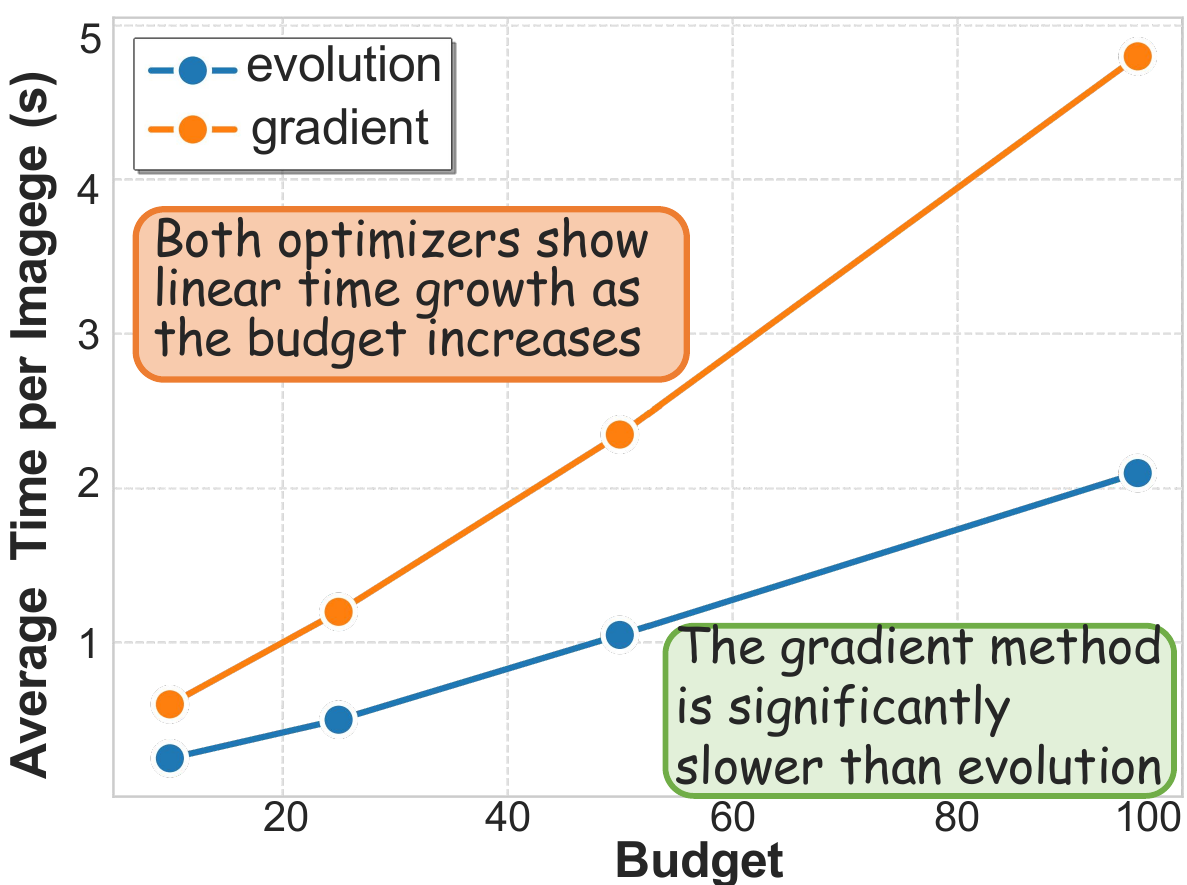}
    \vspace{-0.5cm}
    \caption{Optimization cost}
    \label{fig:cost_compare}
\end{subfigure}
\caption{
Cross-task transferability comparison and Optimization cost.
Figure \ref{fig:Mehod_compare} shows that task-specific TLM attacks perform well on their optimized tasks but have limited cross-task performance, while our method yields more consistent cross-task effects. Figure \ref{fig:cost_compare} reports the average optimization time per image under different search budgets. The gradient-based optimizer is consistently slower than the evolutionary optimizer, with the gap widening as the budget increases.
}
\label{fig:cost}
\vspace{-0.2cm}
\end{figure}

Recent studies on zero-shot adversarial robustness of VLMs have mainly focused on adversarial fine-tuning\cite{mao2022understanding,li2024language,ma2024tima}, especially for CLIP-based\cite{tao2025calibrating} models. For example, FARE\cite{schlarmann2024robust} adversarially fine-tunes the CLIP visual encoder to improve the robustness of downstream LVLMs, while AdvPT\cite{zhang2024adversarial} constructs adversarial image embeddings and aligns them with learnable textual prompts. Despite their promising robustness gains, most existing methods still rely on iterative gradient-based optimization, or even more complex higher-order strategies, to generate strong adversarial examples\cite{jiang2022query,williams2024evolutionary}. This inevitably introduces substantial computational and memory overhead\cite{zhou2024few}, especially when stronger adversaries are pursued through more attack iterations or curvature-related information. Moreover, conventional gradient-based attacks typically optimize a single adversarial candidate along one local gradient trajectory, which restricts the search space exploration and may cause the attack to be trapped in suboptimal adversarial directions, thereby limiting its ability to discover transferable perturbations across tasks~\cite{liu2024gradient,jia2025evolution,li2016seeking}.

To empirically examine these limitations, we conduct comparisons from both optimization cost and cross-task transferability perspectives.
As shown in Fig.~\ref{fig:cost_compare}, the gradient-based optimizer incurs consistently higher average time per image than the evolutionary optimizer under matched search budgets, and the cost gap becomes more evident as the budget increases. Furthermore, Fig.~\ref{fig:Mehod_compare} compares our method with task-specific Training Loss Minimization (TLM) attacks.
Although each TLM variant achieves strong performance on its optimized task, its CTSR-4 score remains much lower than that of our method.
These results suggest that task-specific gradient-based optimization is prone to following task-dependent local trajectories, while our evolutionary search is more effective in discovering cross-task transferable adversarial perturbations with lower optimization cost. The motivation is illustrated in Figure~\ref{fig:motivation}. The detailed experimental results corresponding to Figure~\ref{fig:Mehod_compare} are further provided in Appendix~\ref{supp_sec:additional_results}.

Motivated by this observation, we propose Co-Evolutionary Cross-Modal Attack (CoEvoAttack), a cross-modal adversarial optimization framework guided by evolutionary computation. CoEvoAttack uses evolutionary computation as an outer search process and keeps gradient-based updates for each candidate. On the textual side, we build a constrained population of hard negative samples around the original class. Through selection, mutation, and crossover, these samples stay close to the original semantic neighborhood while providing stronger adversarial pressure to suppress the source category. On the visual side, we keep multiple perturbation candidates within the source object region and update them with momentum-based gradients and evolutionary operations. Therefore, the attack is not limited to one gradient path, but can explore different search directions and select stronger perturbations. This dual evolution strategy reduces optimization cost, improves stability, and helps the attack find stronger negative guidance and more effective visual perturbations.

\begin{figure*}[!t]
    \centering
    \includegraphics[width=0.99\linewidth]{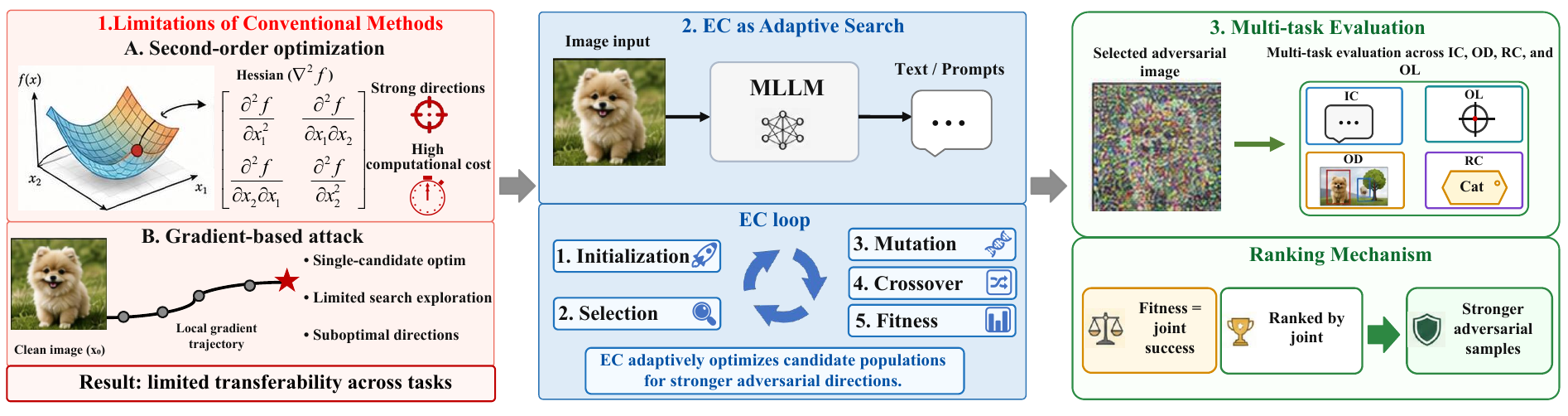}
    \vspace{-0.3cm}
    \caption{
    Motivation of EC-guided multimodal adversarial optimization. Evolutionary computation adaptively searches visual and textual adversarial candidates to overcome the high cost and limited transferability of conventional attacks.
    }
    \label{fig:motivation}
\end{figure*}

From the theoretical perspective, we analyze why the proposed co-evolutionary search can improve cross-task adversarial discovery under constrained object-region perturbations. First, we show that the image-side evolutionary process preserves feasibility: all perturbation candidates remain within the masked object region and satisfy the prescribed $\ell_{\infty}$ perturbation budget through projection and masking operations. In addition, by tracking the best candidate across generations, the best observed adversarial fitness is guaranteed to be non-decreasing, without requiring convexity or smoothness of the VLM loss landscape. We further prove that population-based search increases the probability of reaching high-margin adversarial regions compared with a single randomly initialized trajectory. These analyses provide theoretical support for using evolutionary computation as an adaptive search mechanism to explore multiple feasible adversarial basins and discover more robust and transferable perturbations.

From the experimental perspective, we validate the effectiveness and generality of CoEvoAttack through extensive evaluations on multiple unified vision-language models and cross-task attack settings. Specifically, we conduct experiments on Florence-2, OFA, and UnifiedIO-2 over four representative vision-language tasks, including image captioning, object detection, region categorization, and object localization. The results show that our method achieves stronger average attack success rates and higher cross-task success rates than existing targeted attack baselines, demonstrating that the generated adversarial examples can consistently manipulate shared object-level semantics rather than overfitting to a single task output. Moreover, ablation studies confirm the complementary roles of object-region perturbation, text-side semantic evolution, and image-side perturbation evolution, while analyses on perturbation budget, population size, LoRA adaptation, and evolutionary optimizers provide a comprehensive understanding of the factors affecting attack strength and transferability.

In summary, our main contributions are as follows:
\begin{itemize}
    \item We propose \textbf{CoEvoAttack}, an evolutionary-computation-guided cross-modal attack framework for unified vision-language models. It generates object-level adversarial examples that induce source-to-target semantic shifts across tasks. It jointly evolves textual guidance and localized visual perturbations to explore adversarial directions.

    \item We design a \textbf{dual-sided co-evolutionary optimization mechanism} consisting of text-side and image-side modules. The text-side module evolves hard negative semantic embeddings to strengthen source-category repulsion, while the image-side module evolves object-region perturbation candidates with momentum-guided updates, selection, mutation, and crossover to explore stronger adversarial directions.

    \item We provide \textbf{theoretical analyses} to explain the advantage of the proposed co-evolutionary search. The analyses show that CoEvoAttack preserves feasible object-region perturbations, guarantees non-degradation of the best observed fitness, and improves the probability of discovering high-margin adversarial regions compared with single-trajectory optimization.

    \item We conduct \textbf{extensive experiments and analyses} on multiple unified VLMs and four vision-language tasks. The results demonstrate the strong attack performance and cross-task transferability of CoEvoAttack, while ablation studies and further analyses verify the effectiveness of its key modules and design choices.
\end{itemize}

\section{Related Work}

\subsection{Unified VLMs}

Unified vision-language models (VLMs) align visual and textual representations to support diverse tasks within a shared framework. Representative models, including CLIP~\cite{alayrac2022flamingo}, OFA~\cite{wang2022ofa}, Unified-IO~\cite{lu2024unified}, Florence-2~\cite{xiao2024florence}, and LLaVA-style multimodal assistants~\cite{liu2023visual}, have demonstrated strong generalization across image captioning, visual question answering, object detection, and visual grounding. While such unified semantic representations provide a foundation for multi-task vision-language understanding, they may amplify robustness risks: manipulating shared cross-modal object semantics can cause errors to propagate across multiple tasks. Unlike works that primarily improve unified VLM modeling capability, our study investigates their object-level adversarial vulnerability by examining whether a single adversarial image can consistently shift the perceived semantics of a source object toward a target class across diverse vision-language tasks.

\subsection{Adversarial Attacks}

Adversarial attacks were first studied in image classification, where small perturbations are optimized to mislead neural networks, as in FGSM~\cite{goodfellow2014explaining}, PGD~\cite{MadryMSTV18}, and C\&W~\cite{carlini2017towards}. With the rise of VLMs, attacks have been extended to multimodal systems by perturbing visual inputs, textual inputs, or both to disrupt image-text alignment. Representative methods include Co-Attack~\cite{zhang2022towards}, which jointly optimizes textual and visual adversarial inputs, AttackVLM~\cite{zhao2023evaluating}, which studies transferable attacks on generative VLMs via surrogate models, and CRAFT~\cite{zhao2025one}, which demonstrates that manipulating region-level visual-textual alignment can affect multiple downstream tasks. However, existing attacks largely rely on gradient-based optimization, surrogate-model matching, or task-specific objectives, and mainly search in the visual perturbation space with limited explicit optimization of cross-modal semantic guidance. In contrast, our method performs dual-sided evolutionary search by evolving hard negative textual guidance in the semantic space and object-region perturbations in the visual space, and evaluates whether the same adversarial image can consistently mislead image captioning, object detection, region-to-category recognition, and object location prediction.

\subsection{Evolutionary Computation}

Evolutionary computation (EC) is a family of population-based optimization methods that improve candidate solutions through selection, mutation, crossover, and recombination~\cite{holland1992adaptation,storn1997differential,kennedy1995particle,hansen2006cma,ma2018survey}. Since EC does not require explicit gradients, it is well suited for constrained, discrete, non-differentiable, or black-box optimization problems~\cite{demir2025multi}, and has been applied to adversarial attacks such as the one-pixel attack using differential evolution~\cite{su2019one} and evolution-strategy-based black-box attacks~\cite{qiu2021black}. These studies~\cite{li2022approximated,gupta2015multifactorial,li2025evolutionary,wang2024diversity,li2024multiobjective} demonstrate the flexibility of population-based search as an alternative to purely gradient-based optimization~\cite{basak2012multimodal,cheng2017evolutionary,lin2025landscape,li2023multitask,li2024transfer}. However, existing EC-based attacks mainly focus on conventional vision models or pixel-level perturbation search, with limited exploration of unified VLMs and cross-modal semantic manipulation. In contrast, our framework uses EC as a cross-modal adaptive search mechanism: it evolves hard negative textual embeddings to strengthen semantic adversarial guidance and evolves object-region perturbations with efficient first-order updates, enabling a more stable and efficient attack on shared object-level semantics in unified VLMs.

\begin{figure*}[!t]
    \centering
    \includegraphics[width=0.99\linewidth]{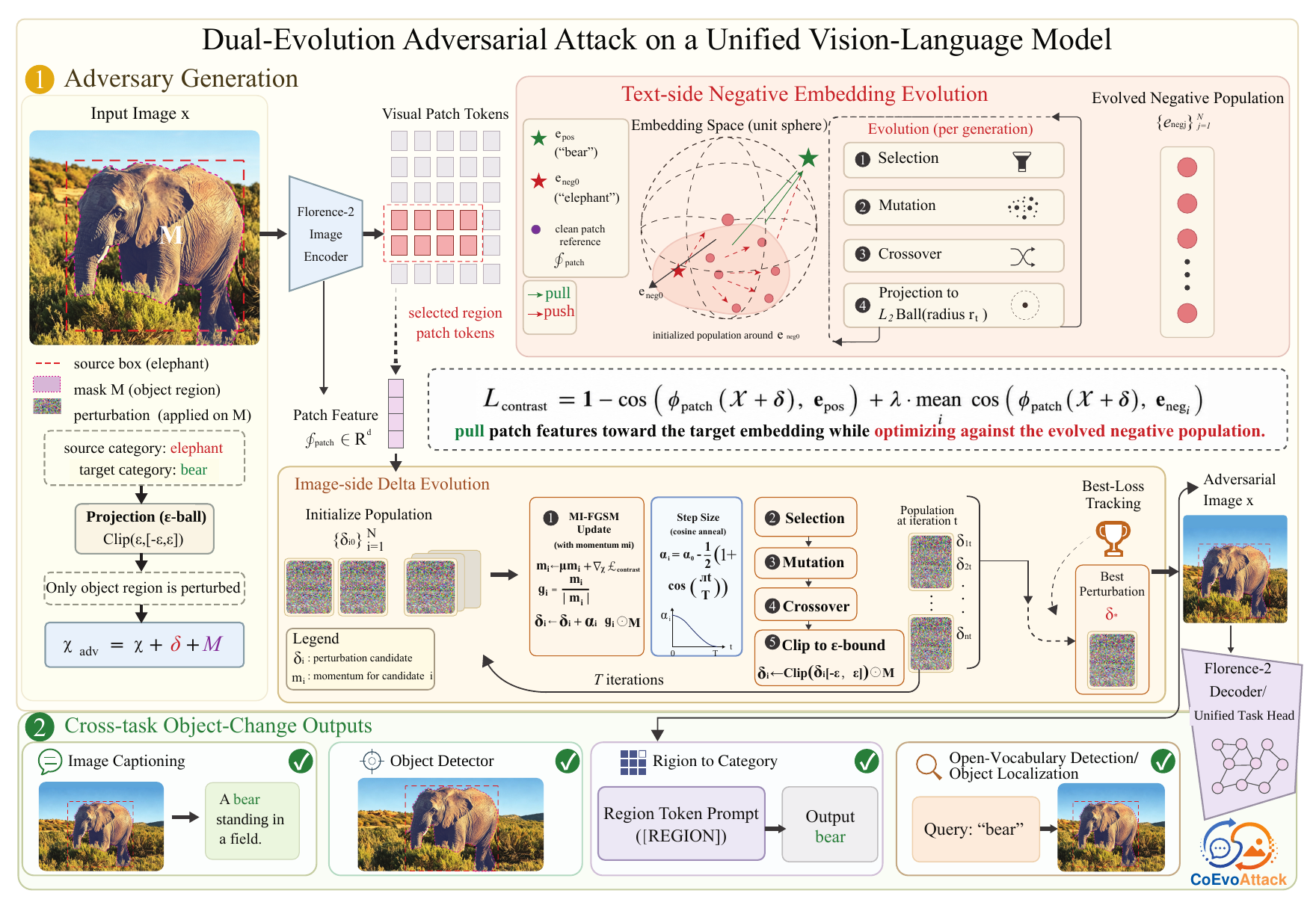}
    \vspace{-0.3cm}
    \caption{
    Overall framework of \textbf{CoEvoAttack}. The method jointly evolves text-side negative embeddings and image-side box-constrained perturbations to generate cross-task adversarial examples for unified vision-language models.
    }
    \label{fig:pipe}
\end{figure*}

\section{Proposed Method}

As illustrated in Fig.~\ref{fig:pipe}, we propose \textbf{CoEvoAttack}, a dual-sided evolutionary adversarial attack framework for unified vision-language models. Given a source object, we first localize it with its bounding box~\cite{zhao2025one} and restrict the perturbation to this region, while keeping all model parameters frozen. On the textual side, \textbf{CoEvoAttack} evolves a population of hard negative embeddings around the source-category semantics through selection, mutation, and crossover, providing stronger semantic repulsion from the original object class. On the image side, it maintains multiple box-constrained perturbation candidates and combines momentum-based first-order updates with evolutionary operations, enabling the attack to explore multiple perturbation trajectories instead of a single gradient path. The optimized perturbation is applied to the source-object region to generate the adversarial image, which is evaluated across image captioning, object detection, region categorization, and object localization to verify its ability to consistently manipulate shared object-level semantics. The following subsections detail the text-side evolution, image-side evolution, and theoretical analysis. A summary of the symbols used in this section is provided in Appendix~\ref{supp_sec:symbols}.

\subsection{Text-side evolutionary computation}

\noindent\textbf{Adversarial Example Initialization.}
On the text side, we first extract an object-level visual reference from the clean image. Given the target patch set $\mathcal T$, the visual representation of the target object is obtained by averaging the visual tokens within the corresponding region:
\begin{equation}
z_v^{clean}=\frac{1}{|\mathcal T|}\sum_{j\in\mathcal T}F_v(x)_j.
\tag{1}
\label{1}
\end{equation}

Here, $F_v(x)_j$ denotes the visual feature produced by the image encoder at the $j$-th target patch, and $\mathcal T$ is obtained by mapping the target bounding box onto the patch grid. The aggregated representation $z_v^{clean}$ serves as the visual reference for subsequent text-side fitness evaluation.

Based on $z_v^{clean}$, we initialize a population of continuous negative samples around the source-category text embedding $e_s^-$. The initial text population is generated as
\begin{equation}
\begin{aligned}
e_k^{(0)}
&=
\Pi_{\mathbb B_2(e_s^-,R_t)}
\big(e_s^-+\xi_k\big), \\
\xi_k
&\sim
\mathcal U(-R_t,R_t),
\qquad
k=1,\dots,K_t .
\end{aligned}
\tag{2}
\label{2}
\end{equation}

Here, $K_t$ is the text population size, $R_t$ denotes the text search radius, and $\Pi_{\mathbb B_2(e_s^-,R_t)}(\cdot)$ projects a candidate onto the $L_2$ ball centered at $e_s^-$ with radius $R_t$. Each candidate is initialized by adding a random continuous perturbation to $e_s^-$ and then projected back into the constrained semantic neighborhood if necessary. This keeps the initial population close to the source-category semantics and avoids unconstrained drift in the text embedding space.

\noindent\textbf{Fitness Evaluation and Selection.}
For text-side evaluation, we compute the clean-state similarity between the source-category embedding and the clean object representation:
\begin{equation}
s_0=\operatorname{sim}(z_v^{clean},e_s^-).
\tag{3}
\label{3}
\end{equation}

The baseline score $s_0$ reflects the original alignment between the clean object representation and the source-category semantics. A good text candidate should preserve this alignment while avoiding collapse toward the target category.

For a text individual $e_k^{(u)}$ in generation $u$, its text-side fitness is defined as
\begin{equation}
\begin{aligned}
\mathcal F_{txt}(e_k^{(u)})
=
-\Big(
&\left|\operatorname{sim}(e_k^{(u)},z_v^{clean})-s_0\right|
+ \\
\|e_k^{(u)}-e_s^-\|_2 
&+
\beta\,\operatorname{sim}(e_k^{(u)},e^+)
\Big).
\end{aligned}
\tag{4}
\label{4}
\end{equation}

The fitness contains three terms: the first term penalizes deviation from the clean-state source-object alignment; the second constrains the candidate around the source-category center; and the third penalizes excessive similarity to the target-category embedding $e^+$, with $\beta$ controlling this target-avoidance constraint. The leading negative sign converts the penalty into a maximization objective, so candidates with smaller semantic deviation, shorter source distance, and lower target similarity obtain higher fitness values. These individuals are therefore treated as higher-quality hard negative samples.

Based on Eq.~\eqref{4}, we retain the top $K_t/3$ elite individuals:
\begin{equation}
\mathcal S^{(u)}=\operatorname{TopK}_{K_t/3}\big(\mathcal E^{(u)};\mathcal F_{txt}\big).
\tag{5}
\label{5}
\end{equation}

Here, $\mathcal E^{(u)}$ denotes the text population at generation $u$, and $\operatorname{TopK}_{K_t/3}(\cdot)$ selects individuals in descending fitness order. This selection preserves candidates that maintain source-semantic consistency, avoid target-category similarity, and provide informative hard negatives for adversarial optimization.

\noindent\textbf{Mutation.}
To explore the local semantic neighborhood of high-quality negative samples, we apply local mutation to each elite individual. For a selected individual $e_k^{sel}\in\mathcal S^{(u)}$, the mutated offspring is generated as
\begin{equation}
\begin{aligned}
e_k^{mut}
&=
\Pi_{\mathbb B_2(e_s^-,R_t)}
\big(
e_k^{sel}+\eta_k
\big), \\
\eta_k
&\sim
\mathcal U(-\phi_tR_t,\phi_tR_t).
\end{aligned}
\tag{6}
\label{6}
\end{equation}

Here, $\phi_t$ denotes the text-side mutation strength, and $\eta_k$ controls the mutation direction and amplitude. The mutation range is scaled by $\phi_tR_t$, where $R_t$ defines the overall text search radius. After perturbation, $\Pi_{\mathbb B_2(e_s^-,R_t)}(\cdot)$ projects the offspring back into the feasible $L_2$ ball centered at $e_s^-$. Thus, mutation performs fine-grained local exploration around high-fitness negative samples, rather than random re-sampling, while preserving source-related semantic locality.

\noindent\textbf{Crossover.}
In addition to mutation, we introduce crossover to recombine the semantic structures of two elite individuals. Given two distinct parents $e_p^{sel}, e_q^{sel}\in\mathcal S^{(u)}$, the crossover offspring is generated by continuous interpolation:
\begin{equation}
\begin{aligned}
e_k^{cross}
&=
\Pi_{\mathbb B_2(e_s^-,R_t)}
\big(
\lambda_k e_p^{sel}+(1-\lambda_k)e_q^{sel}
\big), \\
\lambda_k
&\sim
\mathcal U(0,1).
\end{aligned}
\tag{7}
\label{7}
\end{equation}

Here, $\lambda_k$ determines the contribution of each parent. The offspring lies in the continuous semantic region between two high-fitness negative samples, allowing the population to exploit promising directions discovered by different elites. The projection operator keeps the offspring within the constrained source-semantic neighborhood and prevents semantic drift.

The next-generation text population combines elite individuals, mutated offspring, and crossover offspring:
\begin{equation}
\begin{aligned}
\mathcal E^{(u+1)}
&=
\operatorname{Refill}_{K_t}
\big(
\mathcal S^{(u)}\cup\mathcal M^{(u)}\cup\mathcal C^{(u)}
\big), \\
\mathcal E^-
&=
\mathcal E^{(U)}.
\end{aligned}
\tag{8}
\label{8}
\end{equation}

Here, $\mathcal M^{(u)}$ and $\mathcal C^{(u)}$ denote the mutation and crossover offspring sets at generation $u$, respectively. The operator $\operatorname{Refill}_{K_t}(\cdot)$ maintains a fixed population size $K_t$. After $U$ evolutionary rounds, the final text-side negative-sample population is obtained as $\mathcal E^-=\{e_k^-\}_{k=1}^{K_t}$.

\subsection{Image-Side Evolution Algorithm}

\noindent\textbf{Adversarial Example Initialization.}
On the image side, we initialize a perturbation population within the target-object region. Let $K_v$ denote the image-side population size and $\epsilon$ the perturbation budget. At generation $t=0$, the perturbation and momentum of individual $i$ are initialized as
\begin{equation}
\delta_i^{(0)}\sim\mathcal U(-\epsilon,\epsilon)\odot M,
\qquad
m_i^{(0)}=0,
\quad i=1,\dots,K_v.
\label{9}
\tag{9}
\end{equation}

Here, $M$ is the binary mask of the target-object region, $\delta_i^{(0)}$ is the initialized perturbation, and $m_i^{(0)}$ is the corresponding momentum. The perturbation is sampled within $[-\epsilon,\epsilon]$ and restricted by $M$, forcing pixels outside the target object to remain unchanged. Thus, the image-side adversarial search is confined to the object region instead of the whole image.

The initial image-side population is formulated as
\begin{equation}
\mathcal P^{(0)}=\{(\delta_i^{(0)},m_i^{(0)})\}_{i=1}^{K_v}.
\tag{10}
\label{10}
\end{equation}

Each image-side individual is represented by a perturbation--momentum pair, so the population encodes both current perturbation patterns and their historical update directions, providing a basis for subsequent momentum-guided evolution.

\noindent\textbf{Fitness Evaluation and Selection.}
For the $i$-th perturbation individual at generation $t$, we compute its object-level visual representation by aggregating target-region patch features:
\begin{equation}
z_i^{(t)}
=
\frac{1}{|\mathcal T|}
\sum_{j\in\mathcal T}
F_v(x+\delta_i^{(t)})_j.
\tag{11}
\label{11}
\end{equation}

Here, $\delta_i^{(t)}$ denotes the $i$-th individual’s perturbation. Since only patches in $\mathcal T$ are aggregated, $z_i^{(t)}$ represents the perturbed target-object feature rather than the global representation.

Given the evolved text-side negative population $\mathcal E^-=\{e_k^-\}_{k=1}^{K_t}$, the image-side optimization loss and corresponding fitness are defined as
\begin{equation}
\begin{split}
\mathcal L_i^{(t)}
&=
1-\operatorname{sim}(z_i^{(t)},e^+)
+
\frac{h}{K_t}\sum_{k=1}^{K_t}\operatorname{sim}(z_i^{(t)},e_k^-),\\
\mathcal F_{img}(\delta_i^{(t)})
&=-\mathcal L_i^{(t)}.
\end{split}
\tag{12}
\label{12}
\end{equation}

The first loss term encourages the perturbed object representation to approach the target-category embedding $e^+$, while the second term suppresses its similarity to the evolved hard negative samples, with $h$ controlling the strength of this constraint. Compared with a single fixed negative embedding, this population-level constraint provides more robust semantic repulsion from the source-related neighborhood. Thus, maximizing $\mathcal F_{img}$ is equivalent to minimizing Eq.~\eqref{12}, encouraging stronger target-semantic alignment and weaker association with the evolved hard negatives.

Before population-level selection, each perturbation candidate is individually updated. The gradient normalization and momentum accumulation are written together as
\begin{equation}
\begin{aligned}
\bar g_i^{(t)}
&=
\frac{\nabla_{\delta_i^{(t)}}\mathcal L_i^{(t)}}
{\operatorname{mean}\left(\left|\nabla_{\delta_i^{(t)}}\mathcal L_i^{(t)}\right|\right)+\varepsilon_0},\\
m_i^{(t+1)}
&=\mu m_i^{(t)}+\bar g_i^{(t)}.
\end{aligned}
\tag{13}
\label{13}
\end{equation}

The mean-absolute normalization stabilizes the update scale across individuals, $\varepsilon_0$ avoids numerical instability, and $\mu$ is the momentum decay coefficient. This allows each perturbation candidate to follow an independent MI-FGSM-style trajectory, improving the stability of image-side optimization.

To adaptively control the update magnitude, we use a cosine annealing schedule and update the perturbation as
\begin{equation}
\begin{aligned}
\alpha_t
&=
\alpha\left(
0.1+0.9\cdot\frac{1+\cos(\pi t/T)}{2}
\right),\\
\tilde\delta_i^{(t)}
&=
\Pi_{[-\epsilon,\epsilon]}
\left(
\delta_i^{(t)}-\alpha_t\cdot\operatorname{sign}(m_i^{(t+1)})\odot M
\right).
\end{aligned}
\tag{14}
\label{14}
\end{equation}

The step size gradually decreases, enabling larger exploratory updates in early generations and finer local adjustments later. Here, $\operatorname{sign}(m_i^{(t+1)})$ gives the sign-based descent direction, $M$ restricts the update to the target-object region, and $\Pi_{[-\epsilon,\epsilon]}(\cdot)$ enforces the pixel-wise perturbation budget.

After the individual-level update, the top $K_v/3$ elite candidates are selected according to image-side fitness:
\begin{equation}
\mathcal S^{(t)}
=
\operatorname{TopK}_{K_v/3}
\big(
\{\tilde\delta_i^{(t)}\}_{i=1}^{K_v};
\mathcal F_{img}
\big).
\tag{15}
\label{15}
\end{equation}

Since $\mathcal F_{img}=-\mathcal L_i^{(t)}$, this selection retains the updated perturbations with the smallest image-side loss. The elite set therefore preserves candidates that best promote target-category alignment while reducing similarity to the evolved hard negative semantics.

\noindent\textbf{Mutation.}
For the image-side evolutionary process, we apply local pixel-level mutation to elite perturbation individuals. Given an elite perturbation $\delta_i^{sel}\in\mathcal S^{(t)}$, the mutated offspring is generated as
\begin{equation}
\begin{aligned}
\delta_i^{mut}
&=
\Pi_{[-\epsilon,\epsilon]}
\big(
(\delta_i^{sel}+\zeta_i)\odot M
\big),\\
\zeta_i
&\sim\mathcal U(-\phi_v\epsilon,\phi_v\epsilon).
\end{aligned}
\tag{16}
\label{16}
\end{equation}

Here, $\phi_v$ denotes the image-side mutation strength, and $\zeta_i$ is a local pixel-wise random perturbation. The mutation operates under three constraints: it explores locally around an elite individual, is restricted to the target-object region by the mask $M$, and is clipped by $\Pi_{[-\epsilon,\epsilon]}(\cdot)$ to satisfy the perturbation budget. Therefore, this step performs a locally constrained search around promising perturbation candidates rather than unconstrained sampling over the whole image.

\noindent\textbf{Crossover.}
In addition to mutation, we introduce image-side crossover to recombine the perturbation structures of two elite individuals. Given two distinct elite parents $\delta_p^{\mathrm{sel}}, \delta_q^{\mathrm{sel}} \in S^{(t)}$, their offspring is generated by linear interpolation:
\begin{equation}
\begin{aligned}
\delta_i^{cross}
&=
\Pi_{[-\epsilon,\epsilon]}
\big(
(\lambda_i\delta_p^{sel}+(1-\lambda_i)\delta_q^{sel})\odot M
\big),\\
\lambda_i
&\sim\mathcal U(0,1).
\end{aligned}
\tag{17}
\label{17}
\end{equation}

Here, $\lambda_i$ controls the contribution of each parent perturbation. This operation combines two high-fitness perturbation patterns so that useful local structures discovered by different elites can be inherited by the offspring. The mask $M$ preserves the target-region constraint, and the projection operator ensures that the offspring remains within the perturbation budget.

Finally, the next-generation image population combines elite individuals, mutated offspring, and crossover offspring:
\begingroup
\setlength{\abovedisplayskip}{3pt}
\setlength{\belowdisplayskip}{3pt}
\setlength{\abovedisplayshortskip}{3pt}
\setlength{\belowdisplayshortskip}{3pt}
\begin{equation}
\begin{aligned}
\mathcal P^{(t+1)}
&=
\operatorname{Refill}_{K_v}
\big(
\mathcal S^{(t)}\cup\mathcal M^{(t)}\cup\mathcal C^{(t)}
\big),\\
\delta^*
&=
\arg\max_{\delta\in\cup_{t=0}^{T}\mathcal P^{(t)}}\mathcal F_{img}(\delta),
\qquad
x^{adv}=x+\delta^*.
\end{aligned}
\tag{18}
\label{18}
\end{equation}
\endgroup
\noindent Here, $\mathcal M^{(t)}$ and $\mathcal C^{(t)}$ denote the mutation and crossover offspring sets at generation $t$, respectively, and $\operatorname{Refill}_{K_v}(\cdot)$ maintains a fixed image-side population size $K_v$. After evolution, the final perturbation $\delta^*$ is selected as the individual with the highest image-side fitness over all generations, and the adversarial image is obtained as $x^{adv}=x+\delta^*$.

\begin{table*}[!t]
\vspace{-0.1cm}
\small
\centering
\caption{Comparison of attack performance across four tasks (IC: Image Captioning, OD: Object Detection, RC: Region Categorization, OL: Object Localization) and evaluation metrics (avg: Average Success Rate, CTSR-4: Cross-Task Success Rate on all 4 tasks, CTSR-3: Cross-Task Success Rate on at least 3 tasks).}
\begin{tabular}{ccccccccc}
\hline
\multirow{2}{*}{Models} & \multirow{2}{*}{Method} & \multicolumn{4}{c}{Tasks} & \multicolumn{3}{c}{Evaluate Metrics} \\ \cline{3-9} 
                        &                         & IC & OD & RC & OL & avg & CTSR-4 & CTSR-3 \\ \hline

\multirow{6}{*}{OFA \cite{wang2022ofa}} 
                        & no attack & 0.005 & 0.007 & 0.003 & 0.026 & 0.010 & 0 & 0.001 \\
                        & Mix.Attack \cite{tu2023many} & 0.200 & 0.101 & 0.194 & 0.199 & 0.174 & 0.043 & 0.077 \\
                        & MF-it \cite{zhao2023evaluating} & 0.133 & 0.102 & 0.129 & 0.262 & 0.157 & 0.027 & 0.086 \\
                        & MF-ii \cite{zhao2023evaluating} & 0.176 & 0.150 & 0.154 & 0.295 & 0.194 & 0.067 & 0.098 \\
                        & Attack-Bard \cite{dong2023robust} & \textbf{0.508} & 0.211 & 0.184 & 0.101 & 0.251 & 0.087 & 0.106 \\
                        & CRAFT\cite{zhao2025one} &0.445  &0.324  &0.410  &0.552  &0.443  &0.204  &0.311  \\     
                        \rowcolor{green!10}
                        & \textbf{CoEvoAttack} &0.491  & \textbf{0.355} & \textbf{0.462} & \textbf{0.571}  & \textbf{0.470} & \textbf{0.224} & \textbf{0.342} \\ \hline

\multirow{6}{*}{UnifiedIO-2 \cite{lu2024unified}} 
                        & no attack & 0.002 & 0.002 & 0 & 0.006 & 0.003 & 0 & 0 \\
                        & Mix.Attack \cite{tu2023many} & 0.014 & 0.005 & 0.208 & 0.165 & 0.098 & 0.002 & 0.008 \\
                        & MF-it \cite{zhao2023evaluating} & 0.101 & 0.042 & 0.279 & 0.456 & 0.220 & 0.017 & 0.062 \\
                        & MF-ii \cite{zhao2023evaluating} & 0.471 & 0.146 & 0.323 & 0.347 & 0.322 & 0.090 & 0.283 \\
                        & Attack-Bard \cite{dong2023robust} & \textbf{0.934} & 0.146 & 0.487 & 0.278 & 0.461 & 0.098 & 0.386 \\
                        & CRAFT\cite{zhao2025one} & 0.571 & 0.529 & 0.681 & 0.782 & 0.640 & 0.405 &0.577 \\
                        \rowcolor{green!10}
                        & \textbf{CoEvoAttack} & 0.689  & \textbf{0.620} & \textbf{0.778} &\textbf{0.832}  & \textbf{0.730} & \textbf{0.510} & \textbf{0.689} \\ \hline

\multirow{7}{*}{Florence-2 \cite{xiao2024florence}}  
                        & no attack & 0.002 & 0.001 & 0.005 & 0.009 & 0.004 & 0 & 0 \\
                        & Mix.Attack \cite{tu2023many} & 0.126 & 0.073 & 0.098 & 0.547 & 0.211 & 0.067 & 0.083 \\
                        & MF-it \cite{zhao2023evaluating} & 0.320 & 0.184 & 0.277 & 0.387 & 0.292 & 0.139 & 0.205 \\
                        & MF-ii \cite{zhao2023evaluating} & 0.571 & 0.332 & 0.501 & 0.462 & 0.466 & 0.264 & 0.338 \\
                        & Attack-Bard \cite{dong2023robust} & \textbf{0.935} & 0.241 & 0.346 & 0.296 & 0.308 & 0.213 & 0.310 \\ 
                        & CRAFT\cite{zhao2025one} & 0.530 & 0.491 & 0.655 & \textbf{0.633} & 0.577 & 0.355 & 0.503 \\ 
                        \rowcolor{green!10}
                        & \textbf{CoEvoAttack} & 0.707 & \textbf{0.611} & \textbf{0.846} & 0.628 & \textbf{0.698} & \textbf{0.443} & \textbf{0.623} \\ \hline

\end{tabular}
\label{tab:task_level_attack}
\vspace{-0.2cm}
\end{table*}

\section{Theoretical Analyses}

We provide theoretical analyses of why \textbf{co-evolutionary search improves cross-task adversarial transfer}.

Let $\Delta_M=\{\delta:\|\delta\|_{\infty}\le \epsilon,\ \delta=\delta\odot M\}$ denote the feasible perturbation set induced by the object mask $M$ and the $\ell_\infty$ perturbation budget $\epsilon$. For any $\delta\in\Delta_M$, the object-level visual representation is defined as $z(\delta)=\frac{1}{|\mathcal{T}|}\sum_{j\in\mathcal{T}}F_v(x+\delta)_j$. Given the evolved negative text population $E^-=\{e_k^-\}_{k=1}^{K_t}$, the image-side loss in Eq. (\ref{12}) can be rewritten as $L(\delta)=1-\mathcal{M}(\delta)$, where $\mathcal{M}(\delta)=\operatorname{sim}(z(\delta),e^+)-h\frac{1}{K_t}\sum_{k=1}^{K_t}\operatorname{sim}(z(\delta),e_k^-)$ is the cross-modal adversarial semantic margin. Thus, minimizing $L(\delta)$ is equivalent to maximizing $\mathcal{M}(\delta)$, increasing the target-category alignment while suppressing the source-neighborhood semantics.

\begin{proposition}[Feasibility and Best-Fitness Non-Degradation]
Assume that the image-side evolutionary update applies the projection $\Pi_{[-\epsilon,\epsilon]}(\cdot)$ and the object mask $M$ as in Eqs. (\ref{14})--(\ref{17}). Then every generated perturbation remains in $\Delta_M$. Moreover, if the best individual found so far is retained after each generation, then the best observed image-side fitness is non-decreasing: $\max_{\delta\in \cup_{\tau=0}^{t+1}P^{(\tau)}}F_{\rm img}(\delta)\ge \max_{\delta\in \cup_{\tau=0}^{t}P^{(\tau)}}F_{\rm img}(\delta)$. Equivalently, $\min_{\delta\in \cup_{\tau=0}^{t+1}P^{(\tau)}}L(\delta)\le \min_{\delta\in \cup_{\tau=0}^{t}P^{(\tau)}}L(\delta)$.
\end{proposition}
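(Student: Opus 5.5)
The plan is to prove the two claims separately: feasibility by induction over generations, and monotonicity directly from nested unions (with elitism making the best value attained by the current population as well).

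\textbf{Feasibility.} I will induct on $t$ that every candidate generated up to generation $t$ lies in $\Delta_M$. First I record two facts about the operators, treating both as coordinatewise with $M\in\{0,1\}$ binary.
\begin{itemize}
\item Masking commutes with clipping: $\Pi_{[-\epsilon,\epsilon]}(v\odot M)=\Pi_{[-\epsilon,\epsilon]}(v)\odot M$. This holds because clipping sends $0$ to $0$ coordinatewise.
\item If $v=v\odot M$, then $\Pi_{[-\epsilon,\epsilon]}(v)\in\Delta_M$.
\end{itemize}

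For the base case, Eq.~(\ref{9}) gives $\delta_i^{(0)}=u\odot M$ with $|u_j|\le\epsilon$, so $\delta_i^{(0)}\in\Delta_M$.

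For the inductive step, assume $\delta_i^{(t)}\in\Delta_M$ and check each operator.
\begin{itemize}
\item \emph{Update (\ref{14}).} Off the mask, $M_j=0$ and $(\delta_i^{(t)})_j=0$, so the argument of the projection is $0$ there and stays $0$ after clipping. On the mask, clipping bounds the coordinate by $\epsilon$. Hence $\tilde\delta_i^{(t)}\in\Delta_M$.
\item \emph{Mutation (\ref{16}) and crossover (\ref{17}).} The argument is explicitly multiplied by $M$ before clipping, so the second fact applies directly. For crossover one could also note that $\Delta_M$ is convex, so the interpolation is already feasible.
\item \emph{Selection (\ref{15}) and $\operatorname{Refill}_{K_v}$.} These only choose among, or copy, existing feasible individuals.
\end{itemize}
I will state explicitly that $\operatorname{Refill}_{K_v}$ draws only from $\mathcal S^{(t)}\cup\mathcal M^{(t)}\cup\mathcal C^{(t)}$, or from further applications of the same operators. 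If it could inject fresh samples, those would be drawn as in Eq.~(\ref{9}) and are feasible anyway. This closes the induction, and consequently $\delta^*\in\Delta_M$.

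\textbf{Monotonicity.} The inequality follows from set inclusion alone: $\cup_{\tau=0}^{t}P^{(\tau)}\subseteq\cup_{\tau=0}^{t+1}P^{(\tau)}$, so the maximum of $F_{\rm img}$ over the larger set is at least the maximum over the smaller one. Both maxima exist because the populations are finite. The elitism hypothesis gives the stronger statement that $\max_{\delta\in P^{(t+1)}}F_{\rm img}\ge\max_{\delta\in P^{(t)}}F_{\rm img}$. I would add this as a remark, arguing that the retained best individual is a member of $P^{(t+1)}$.

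One subtlety is that the current population stores the post-update $\tilde\delta$ rather than the pre-update $\delta$. Fitness is a deterministic function of $\delta$, given fixed $x$, $M$ and the fixed evolved set $E^-$, so a retained individual keeps its fitness value; I will point this out. The loss form follows at once from $F_{\rm img}=-L$, since negation swaps max and min.

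\textbf{Main obstacle.} I expect the only delicate step to be the momentum update (\ref{14}). There the mask multiplies only the step, not $\delta_i^{(t)}$, so feasibility depends on the inductive hypothesis $\delta_i^{(t)}=\delta_i^{(t)}\odot M$. The induction must therefore carry mask-support as part of the hypothesis, not just the $\ell_\infty$ bound. Everything else is bookkeeping. No convexity or smoothness of $L$ is used.
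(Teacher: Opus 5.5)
Your proposal is correct and follows the same route as the paper. Feasibility comes from the initialization in Eq.~(\ref{9}) together with the mask-and-clip structure of Eqs.~(\ref{14})--(\ref{17}), and non-degradation follows purely from the nesting $\cup_{\tau=0}^{t}P^{(\tau)}\subseteq\cup_{\tau=0}^{t+1}P^{(\tau)}$ plus $F_{\rm img}=-L$. Your induction carries mask-support through the update in Eq.~(\ref{14}), where $M$ multiplies only the step; this usefully tightens the paper's one-line claim that the perturbation is ``always masked,'' but it does not change the argument.
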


\begin{proof}
For initialization, each perturbation satisfies $\delta_i^{(0)}\sim U(-\epsilon,\epsilon)\odot M$, and therefore $\delta_i^{(0)}\in\Delta_M$. During the momentum update, mutation, and crossover steps, the perturbation is always masked by $M$ and projected back to $[-\epsilon,\epsilon]$. Hence all offspring also belong to $\Delta_M$, proving feasibility.

For the second claim, the final perturbation is selected according to the best image-side fitness over all generated candidates, as in Eq.~(18). Since the candidate set $\cup_{\tau=0}^{t}P^{(\tau)}$ is a subset of $\cup_{\tau=0}^{t+1}P^{(\tau)}$, taking the maximum over the larger set cannot decrease the best fitness. Because $F_{\rm img}(\delta)=-L(\delta)$, the corresponding best loss is non-increasing.
\end{proof}

\begin{tcolorbox}[width=1.0\linewidth, colframe=blackish, colback=beaublue, boxsep=0mm, arc=2mm, left=2mm, right=2mm, top=5mm, bottom=2mm]
\vspace{-0.3cm}
\noindent\textit{Remark.}
This proposition does not require convexity or smoothness of the VLM loss landscape. It only depends on the projection, masking, and best-candidate tracking mechanisms. Therefore, CoEvoAttack never violates the object-region perturbation constraint and does not discard the best adversarial candidate discovered during the evolutionary process.
\end{tcolorbox}
\vspace{-0.3cm}

\begin{theorem}[Population Search Improves Adversarial Discovery]
Let $\mathcal{A}_{\tau}=\{\delta\in\Delta_M:\mathcal{M}(\delta)\ge \tau\}$ be the set of perturbations whose adversarial semantic margin is at least $\tau$. Suppose a single randomly initialized feasible candidate falls into $\mathcal{A}_{\tau}$ with probability $p_\tau$. With $K_v$ independently initialized image-side candidates, the probability that at least one candidate reaches $\mathcal{A}_{\tau}$ is $\mathbb{P}\left(\exists i\in[K_v],\delta_i^{(0)}\in\mathcal{A}_{\tau}\right)=1-(1-p_\tau)^{K_v}$. Consequently, $1-(1-p_\tau)^{K_v}\ge p_\tau$, with strict improvement whenever $K_v>1$ and $0<p_\tau<1$.
\end{theorem}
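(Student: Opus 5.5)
The plan is to work with the complementary event and use independence of the initializations. For each $i\in[K_v]$, let $B_i=\{\delta_i^{(0)}\notin\mathcal{A}_\tau\}$. By Eq.~(\ref{9}) the candidates $\delta_1^{(0)},\dots,\delta_{K_v}^{(0)}$ are drawn i.i.d.\ from $\mathcal U(-\epsilon,\epsilon)\odot M$. The first part of the Proposition shows this law is supported on $\Delta_M$, so each $\delta_i^{(0)}$ is a ``randomly initialized feasible candidate'' in the sense of the hypothesis. Hence $\mathbb{P}(\delta_i^{(0)}\in\mathcal{A}_\tau)=p_\tau$, and therefore $\mathbb{P}(B_i)=1-p_\tau$.

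I would state this identification explicitly, because it is the only point needing care. The hypothesis must be read with the same initialization distribution as Eq.~(\ref{9}). It is also worth noting that $\mathcal{A}_\tau$ is measurable: it is the superlevel set of $\mathcal{M}$, which is a composition of $F_v$ and cosine similarities and hence continuous (or at least Borel) in $\delta$.

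With this in place, the event that no candidate reaches $\mathcal{A}_\tau$ is $\bigcap_{i=1}^{K_v}B_i$. The events $B_i$ depend on distinct independent variables, so
\[
\mathbb{P}\Big(\bigcap_{i=1}^{K_v}B_i\Big)=\prod_{i=1}^{K_v}\mathbb{P}(B_i)=(1-p_\tau)^{K_v}.
\]
Taking complements gives $\mathbb{P}(\exists i,\ \delta_i^{(0)}\in\mathcal{A}_\tau)=1-(1-p_\tau)^{K_v}$.

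For the inequality, write $q=1-p_\tau\in[0,1]$. The claim $1-q^{K_v}\ge 1-q$ is equivalent to $q^{K_v}\le q$, which holds because $q\in[0,1]$ and $K_v\ge1$. For strictness, take $K_v>1$ and $0<p_\tau<1$, so that $0<q<1$. Then
\[
q^{K_v}=q\cdot q^{K_v-1}<q,
\]
since $q>0$ and $q^{K_v-1}<1$. A more elementary route to the inequality is monotonicity: the event for $K_v$ candidates contains the event $\{\delta_1^{(0)}\in\mathcal{A}_\tau\}$.

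Finally, by the non-degradation part of the Proposition, the best observed fitness never drops below its generation-$0$ value. So the event above is a lower bound on the probability that the tracked best candidate $\delta^*$ of Eq.~(\ref{18}) lies in $\mathcal{A}_\tau$, which ties the theorem back to the algorithm's output.
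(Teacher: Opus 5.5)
Your proof is correct and follows the paper's argument: you take the complement of the all-miss event, use independence to get $(1-p_\tau)^{K_v}$, and then use $q^{K_v}\le q$ for $q=1-p_\tau\in[0,1]$, with strict inequality when $0<q<1$ and $K_v>1$. Two of your additions are sound and go beyond what the paper states: identifying $p_\tau$ with the initialization law of Eq.~(9), and observing that $\{\exists i,\ \delta_i^{(0)}\in\mathcal{A}_\tau\}\subseteq\{\delta^*\in\mathcal{A}_\tau\}$, which holds because $\delta^*$ is the fitness maximizer over all generations including $\mathcal{P}^{(0)}$ and $F_{\rm img}=\mathcal{M}-1$.
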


\begin{proof}
Each candidate independently misses the successful region $\mathcal{A}_{\tau}$ with probability $1-p_\tau$. Therefore, all $K_v$ candidates miss $\mathcal{A}_{\tau}$ with probability $(1-p_\tau)^{K_v}$. Taking the complement gives the probability that at least one candidate enters $\mathcal{A}_{\tau}$: $1-(1-p_\tau)^{K_v}$. Since $(1-p_\tau)^{K_v}\le 1-p_\tau$ for $K_v\ge 1$, we have $1-(1-p_\tau)^{K_v}\ge p_\tau$. The inequality is strict when $K_v>1$ and $p_\tau\in(0,1)$.
\end{proof}

\begin{tcolorbox}[width=1.0\linewidth, colframe=blackish, colback=beaublue, boxsep=0mm, arc=2mm, left=2mm, right=2mm, top=5mm, bottom=2mm]
\vspace{-0.3cm}
\noindent\textit{Remark.}
This theorem explains why population-based perturbation evolution can discover stronger transferable attacks than single-trajectory gradient optimization. A conventional gradient attack follows one local path from one initialization, whereas CoEvoAttack explores multiple feasible object-region perturbation basins in parallel. The theorem intentionally avoids claiming global optimality; instead, it establishes a basin-hitting advantage, which is more appropriate for non-convex VLM attack landscapes.
\end{tcolorbox}
\vspace{-0.3cm}


\begin{table*}[t]
\vspace{-0.1cm}
\small
\centering
\caption{Ablation study on perturbation locality and text-/image-side evolutionary computation across four vision-language tasks and cross-task metrics.}
\setlength{\tabcolsep}{5pt}
\renewcommand{\arraystretch}{1.1}
\begin{tabular}{lccc|cccc|ccc}
\toprule
\multirow{2}{*}{Method}
& \multirow{2}{*}{Box}
& \multirow{2}{*}{Text-side}
& \multirow{2}{*}{Image-side}
& \multicolumn{4}{c|}{Tasks}
& \multicolumn{3}{c}{Evaluation Metrics} \\
\cmidrule(lr){5-8} \cmidrule(lr){9-11}
& & & & IC & OD & RC & OL & Avg & CTSR-4 & CTSR-3 \\
\midrule

PGD
&  &  &  
& 0.625 & 0.343 & 0.776 & 0.431
& 0.544 & 0.246 & 0.385 \\

PGD+Box
& $\checkmark$ &  &  
& 0.601 & 0.493 & 0.720 & 0.493
& 0.563 & 0.311 & 0.444 \\

TEXT+EC
& $\checkmark$ & $\checkmark$ &  
& 0.609 & 0.568 & 0.718 & \textbf{0.709}
& 0.651 & 0.440 & 0.588 \\

IMAGE+EC
& $\checkmark$ &  & $\checkmark$
& 0.620 & 0.573 & 0.771 & 0.648
& 0.653 & 0.414 & 0.585 \\

EC w/o Box
&  & $\checkmark$ & $\checkmark$
& \textbf{0.790} & 0.531 & 0.806 & 0.526
& 0.685 & 0.385 & 0.556 \\

\rowcolor{green!10}
\textbf{CoEvoAttack}
& $\checkmark$ & $\checkmark$ & $\checkmark$
& 0.707 & \textbf{0.611} & \textbf{0.846} & 0.628
& \textbf{0.698} & \textbf{0.443} & \textbf{0.623} \\
\bottomrule
\end{tabular}
\small
\label{tab:ablation}
\vspace{-0.2cm}
\end{table*}

\section{Experiment}

In this section, we empirically evaluate the effectiveness of \textbf{CoEvoAttack} under the cross-task object-change attack setting. We first describe the experimental setup, including the attacked unified VLMs, the benchmark dataset, the evaluated tasks, the metrics, and the implementation details. We then compare \textbf{CoEvoAttack} with existing targeted attack methods on different unified VLMs to demonstrate its overall attack performance and cross-task transferability. Next, we conduct ablation studies to verify the contributions of the box-constrained perturbation, text-side evolutionary computation, and image-side evolutionary computation. Finally, we provide further analyses on perturbation budget, population size, parameter-efficient adaptation, and image-side evolutionary optimizer, giving a more complete understanding of the key factors that affect the performance of \textbf{CoEvoAttack}.

\subsection{Experiment Setup}

We evaluate our method on three representative unified vision-language models, including Florence-2-Large~\cite{xiao2024florence}, UnifiedIO-2-Large~\cite{lu2024unified}, and OFA-Large~\cite{wang2022ofa}, using their original pre-trained weights without task-specific fine-tuning. Experiments are conducted on the CrossVLAD benchmark~\cite{zhao2025one}, which contains 3,000 MSCOCO images with 79 object-change pairs across 10 semantic categories. More details of the dataset are provided in Appendix~\ref{supp_sec:dataset}. Following the benchmark protocol, we evaluate targeted object-change attacks on four tasks: image captioning, object detection, region categorization, and object localization. Adversarial examples are generated on Florence-2-Large under an $\ell_{\infty}$ constraint with $\epsilon = 32/255$ for 100 iterations using a step size of $\alpha = 8/255$. We report CTSR-4, CTSR-3, and the average success rate across the four tasks. All experiments are conducted on a dual-vGPU-48GB platform. Further implementation details are in Appendix~\ref{supp_sec:experiment}.

\subsection{Main Results}
Table~\ref{tab:task_level_attack} reports the attack performance of different methods across four vision-language tasks and three unified VLMs, including Florence-2~\cite{xiao2024florence}, OFA~\cite{wang2022ofa}, and UnifiedIO-2~\cite{lu2024unified}. Overall, our \textbf{CoEvoAttack} achieves the strongest performance among all compared methods, obtaining the best average attack success rate and the highest cross-task success rates. In particular, the consistent improvements on CTSR-4 and CTSR-3 demonstrate that \textbf{CoEvoAttack} can generate adversarial images that simultaneously mislead multiple tasks, rather than merely exploiting a single task-specific output. Although some baseline methods achieve competitive results on individual tasks, their performance drops markedly under the stricter cross-task evaluation, indicating limited ability to manipulate shared object-level semantics. In contrast, \textbf{CoEvoAttack} maintains stronger cross-task consistency by jointly evolving textual semantic guidance and visual perturbation candidates. These results suggest that the proposed evolutionary-computation-guided optimization induces a more reliable semantic shift from the source object to the target category, leading to adversarial examples with stronger task transferability and semantic consistency.

\begin{figure}[!t]
\vspace{0.1cm}
\begin{subfigure}[t]{0.49\linewidth} 
    \centering
    \includegraphics[width=1\linewidth]{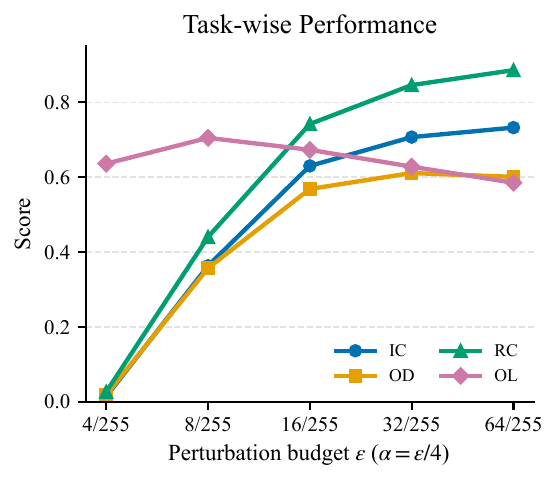}
    \vspace{-0.5cm}
    \caption{Task-wise performance}
    \label{fig:parameter_task}
\end{subfigure}
\hfill
\begin{subfigure}[t]{0.49\linewidth} 
    \centering
    \includegraphics[width=1\linewidth]{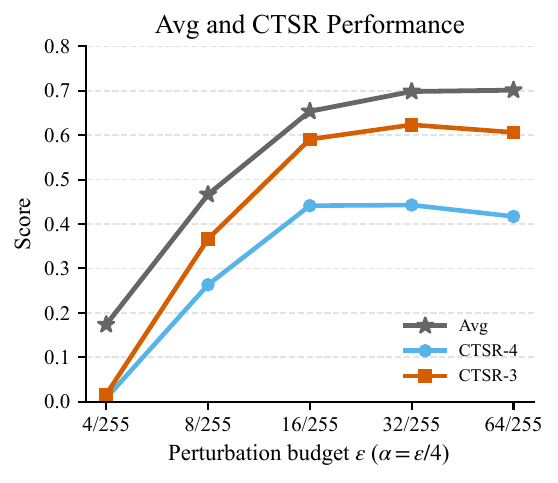}
    \vspace{-0.5cm}
    \caption{Overall performance}
    \label{fig:parameter_avg}
\end{subfigure}
\caption{
Parameter analysis under different perturbation budgets.
In Figure~\ref{fig:parameter_task}, we report the task-wise performance on image captioning (IC), object detection (OD), region categorization (RC), and object localization (OL).
In Figure~\ref{fig:parameter_avg}, we report the average score and cross-task success rates, including CTSR-4 and CTSR-3.
The step size is set to $\alpha=\epsilon/4$ for all settings.
}
\label{fig:parameter}
\vspace{-0.2cm}
\end{figure}

\newcommand{\bestcell}[1]{\cellcolor{green!10}#1}

\begin{table}[!t]
\vspace{0.05cm}
\centering
\caption{Effect of population size in the evolutionary algorithm on image-side and text-side optimization. $P_T$ and $P_I$ denote the population sizes of the text-side and image-side evolutionary algorithms, respectively.}
\small
\setlength{\tabcolsep}{3pt}
\renewcommand{\arraystretch}{1.05}
\resizebox{\columnwidth}{!}{
\begin{tabular}{llccccccc}
\toprule
\multirow{2}{*}{Fixed Side} 
& \multirow{2}{*}{Varied Side} 
& \multicolumn{4}{c}{Task Metrics} 
& \multicolumn{3}{c}{Overall Metrics} \\ 
\cmidrule(lr){3-6} \cmidrule(lr){7-9}
& & IC & OD & RC & OL & Avg. & CTSR-4 & CTSR-3 \\
\midrule

\multirow{4}{*}{$P_T=9$}
& $P_I=6$  & 0.701 & 0.598 & 0.844 & 0.621 & 0.691 & 0.432 & 0.610 \\
& \bestcell{$P_I=9$}  
& \bestcell{\textbf{0.707}} 
& \bestcell{\textbf{0.611}} 
& \bestcell{0.846} 
& \bestcell{0.628} 
& \bestcell{\textbf{0.698}} 
& \bestcell{\textbf{0.443}} 
& \bestcell{\textbf{0.623}} \\
& $P_I=12$ & 0.705 & 0.599 & 0.846 & 0.617 & 0.692 & 0.435 & 0.610 \\
& $P_I=15$ & 0.704 & 0.595 & \textbf{0.849} & \textbf{0.629} & 0.694 & 0.437 & 0.612 \\
\midrule

\multirow{4}{*}{$P_I=9$}
& $P_T=6$  & \textbf{0.709} & 0.601 & 0.845 & 0.621 & 0.694 & 0.435 & 0.614 \\
& \bestcell{$P_T=9$}  
& \bestcell{0.707} 
& \bestcell{\textbf{0.611}} 
& \bestcell{\textbf{0.846}} 
& \bestcell{0.628} 
& \bestcell{\textbf{0.698}} 
& \bestcell{\textbf{0.443}} 
& \bestcell{\textbf{0.623}} \\
& $P_T=12$ & 0.705 & 0.600 & \textbf{0.846} & \textbf{0.630} & 0.695 & 0.439 & 0.615 \\
& $P_T=15$ & 0.702 & 0.595 & 0.842 & 0.624 & 0.691 & 0.429 & 0.612 \\
\bottomrule
\end{tabular}
}
\label{tab:population}
\vspace{-0.2cm}
\end{table}

\subsection{Ablation Study}

We conduct ablation studies to verify the effectiveness of the main components in \textbf{CoEvoAttack}, including the object-region constraint, text-side evolutionary computation, and image-side evolutionary computation. As shown in Table~\ref{tab:ablation}, the vanilla PGD~\cite{MadryMSTV18} baseline achieves limited overall and cross-task attack performance, suggesting that single-trajectory perturbation optimization is insufficient for transferable adversarial attacks. Introducing the object-region constraint improves cross-task metrics, indicating the benefit of focusing perturbations on the source object. Based on this localized setting, both TEXT+EC and IMAGE+EC further enhance performance: TEXT+EC strengthens semantic repulsion from the source category through evolved hard negative guidance, while IMAGE+EC explores multiple visual perturbation candidates beyond a single gradient path. Although EC w/o Box performs well on some task-wise metrics, its cross-task performance remains inferior to the full model. By integrating all three components, \textbf{CoEvoAttack} achieves the best average success rate and cross-task success rates, demonstrating their complementary roles in generating adversarial examples.

Fig.~\ref{fig:parameter} analyzes the effect of the perturbation budget $\epsilon$ on attack performance. When $\epsilon$ is small, the feasible perturbation space is limited, making it difficult to shift the source object toward the target category. As $\epsilon$ increases, both task-wise and overall metrics improve, indicating that a larger budget allows the evolutionary search to discover stronger adversarial candidates. However, the performance gradually saturates under larger budgets, and cross-task metrics no longer improve consistently, suggesting that simply increasing perturbation magnitude cannot continuously enhance semantic consistency. Therefore, we set $\epsilon=32/255$ in the main experiments to balance attack strength and cross-task transferability. The detailed results corresponding to Fig.~\ref{fig:parameter} are provided in Appendix~\ref{supp_sec:additional_results}.

\begin{figure*}[!t]
    \centering
    \includegraphics[width=0.99\linewidth]{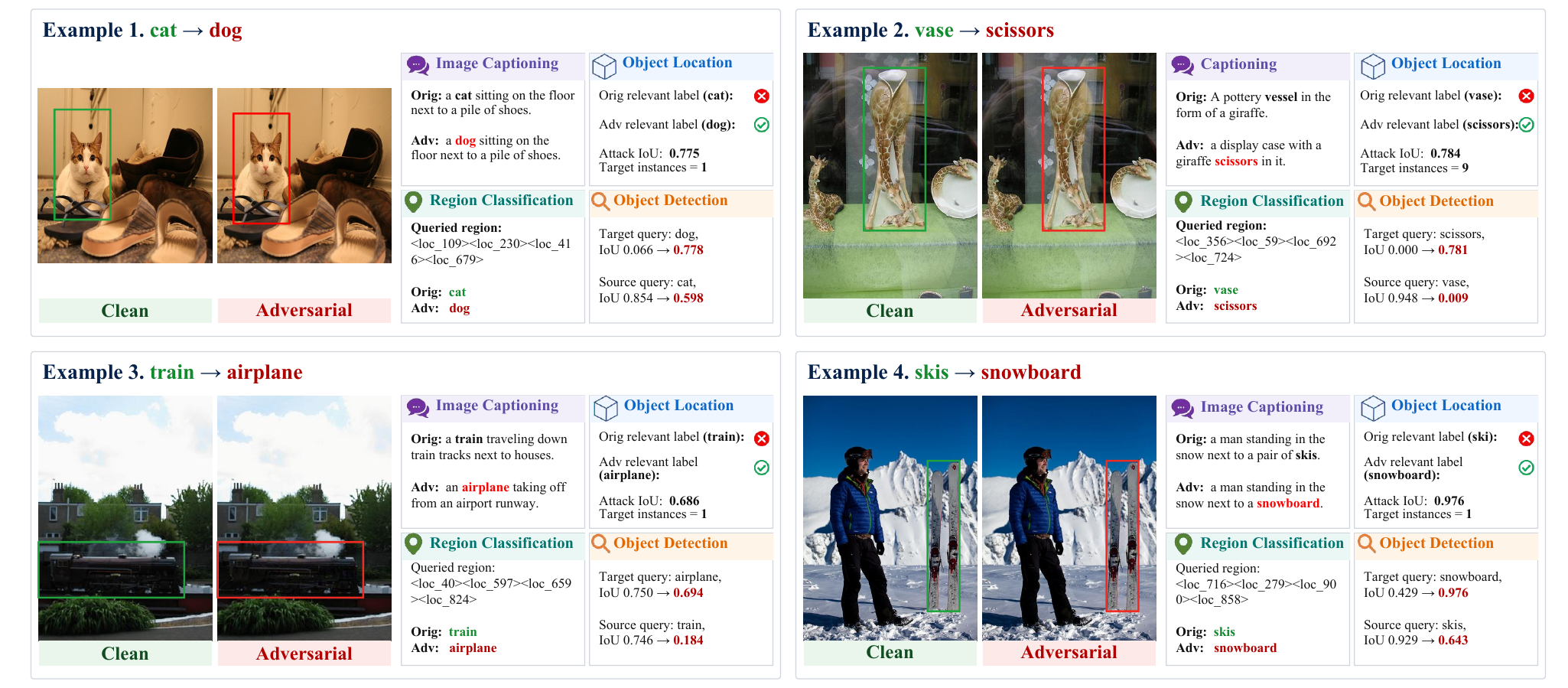}
    \vspace{-0.3cm}
    \caption{
    Cross-task transfer of adversarial examples on Florence-2. Each example compares a clean image with its adversarial counterpart across four vision-language tasks: image captioning, object localization, region classification, and object detection. Green denotes the source concept and red denotes the target concept.IoU denotes intersection over union. More visualization examples are provided in Section~\ref{supp_sec:visualization}.
    }
    \label{fig:instance_panel}
\end{figure*}

\begin{table}[!t]
\vspace{-0.1cm}
\caption{Comparison of PGD, full \textbf{CoEvoAttack}, and its LoRA-based variant. LoRA updates only 0.81\% of parameters while maintaining competitive attack performance.}
\resizebox{\linewidth}{!}{
\begin{tabular}{l  c c c c c c c}
\toprule
\multirow{2}{*}{Method} & \multicolumn{4}{c}{Tasks} & \multicolumn{3}{c}{Metric} \\
\cmidrule(lr){2-5} \cmidrule(lr){6-8}
  & IC & OD & RC & OL & avg & CTSR-4 & CTSR-3 \\
\midrule
PGD
& 0.625 & 0.343 & 0.776 & 0.431
& 0.544 & 0.246 & 0.385 \\
LoRA 
& 0.526 & 0.574 & 0.845 & \textbf{0.634} 
& 0.645 & 0.274 & 0.588 \\
\rowcolor{green!10}
\textbf{CoEvoAttack } 
& \textbf{0.707} & \textbf{0.611} & \textbf{0.846} & 0.628 
& \textbf{0.698} & \textbf{0.443} & \textbf{0.623} \\
\bottomrule
\end{tabular}
}
\centering
\label{tab:lora_results}
\vspace{-0.1cm}
\end{table}

\begin{table}[!t]
\vspace{-0.1cm}
\caption{Comparison of different image-side evolutionary optimizers. DE, PSO, and RCGA are used to optimize the image-side perturbation population under the same framework.}
\resizebox{\linewidth}{!}{
\begin{tabular}{l  c c c c c c c}
\toprule
\multirow{2}{*}{EC-strategy} & \multicolumn{4}{c}{Tasks} & \multicolumn{3}{c}{Metric} \\
\cmidrule(lr){2-5} \cmidrule(lr){6-8}
  & IC & OD & RC & OL & avg & CTSR-4 & CTSR-3 \\
\midrule

DE  
& \textbf{0.794} & 0.543 & \textbf{0.904} & 0.531
& 0.693 & 0.391 & 0.561 \\
PSO 
& 0.571 & 0.533 & 0.687 & \textbf{0.690} 
& 0.621 & 0.411 & 0.557 \\
\rowcolor{green!10}
\textbf{RCGA}  
& 0.707 & \textbf{0.611} & 0.846 & 0.628 
& \textbf{0.698} & \textbf{0.443} & \textbf{0.623} \\
\bottomrule
\end{tabular}
}
\centering
\label{tab:ec}
\vspace{-0.1cm}
\end{table}

\subsection{Further Analyses}

The effect of population size on text-side and image-side evolutionary optimization is reported in Table~\ref{tab:population}. According to the empirical results, the best overall performance is obtained when the text-side and image-side population sizes are both set to 9. Either increasing or decreasing the population size does not lead to better results. A smaller population may provide insufficient candidate diversity, limiting the search space explored by the evolutionary process. In contrast, an excessively large population may introduce redundant candidates and less informative search directions, which do not further improve the attack performance. Therefore, setting $P_T=P_I=9$ provides a better balance between candidate diversity, optimization stability, and attack effectiveness, and we adopt this setting in the main experiments.

We incorporate LoRA modules into the original \textbf{CoEvoAttack} framework to construct a parameter-efficient variant. In Table~\ref{tab:lora_results}, we compare PGD~\cite{MadryMSTV18}, the full version of \textbf{CoEvoAttack}, and its LoRA-enhanced variant. Specifically, this LoRA-based variant updates only 0.81\% of the model parameters, corresponding to a 99.19\% reduction in trainable parameters compared with full fine-tuning. Although its performance remains slightly below that of the full \textbf{CoEvoAttack}, it consistently outperforms PGD~\cite{MadryMSTV18} across most evaluation metrics. These results demonstrate that incorporating parameter-efficient adaptation into \textbf{CoEvoAttack} can effectively strengthen adversarial attacks while requiring substantially fewer trainable parameters. Given its significant reduction in optimization cost, the LoRA-based variant offers a practical and resource-efficient alternative when computational resources are limited.

We further conduct experiments with different evolutionary algorithms on the image side under the same attack framework, as shown in Table~\ref{tab:ec}. Specifically, while keeping the other components unchanged, we use DE~\cite{storn1997differential}, PSO~\cite{kennedy1995particle}, and RCGA~\cite{herrera1998tackling} to optimize the image-side perturbation population. The results show that different evolutionary algorithms exhibit different strengths on individual tasks. Nevertheless, RCGA achieves the best overall performance in terms of the average success rate and cross-task success rates. This indicates that RCGA is more effective in balancing task-wise attack strength and cross-task adversarial consistency. Therefore, we adopt RCGA as the image-side evolutionary optimizer in \textbf{CoEvoAttack}.

\section{Conclusion}

In this paper, we presented CoEvoAttack, an evolutionary-computation-guided cross-modal adversarial attack framework for unified vision-language models. CoEvoAttack jointly performs text-side semantic evolution and image-side perturbation evolution, rather than merely increasing pixel-level perturbation strength. On the text side, it evolves source-neighborhood negative embeddings into harder semantic guidance to suppress the original object category. On the image side, it maintains multiple object-region perturbation candidates and refines them with momentum-guided updates and evolutionary operations, enabling exploration beyond a single gradient path. By coupling these two searches, CoEvoAttack improves optimization efficiency, stability, and cross-task transferability. Theoretical analysis further shows that the dual-sided evolutionary mechanism provides a population-level surrogate for exploring the cross-modal adversarial search space. Extensive experiments on multiple unified VLMs and vision-language tasks demonstrate that CoEvoAttack achieves strong, efficient, and task-transferable attack performance.

\bibliographystyle{IEEEtran}
\bibliography{main}

\clearpage
\raggedbottom

\setcounter{section}{0}
\setcounter{subsection}{0}
\setcounter{figure}{0}
\setcounter{table}{0}
\setcounter{equation}{0}

\renewcommand{\thefigure}{S\arabic{figure}}
\renewcommand{\thetable}{S\arabic{table}}
\renewcommand{\theequation}{S\arabic{equation}}

\makeatletter
\twocolumn[
\begin{@twocolumnfalse}
\begin{center}

{\Huge
Two Sides of the Same Coin: Co-Evolving Search\\
for Cross-Task Attacks on Vision-Language Models\\
(Supplementary Material)
\par}

 \vspace{1.00em}

 {\sublargesize
 Xuanhui Lin,
 Junhao Dong,
 Mingrong Gong,
 Yucheng Chen,
 Xinghua Qu and
 Yew-Soon~Ong,~\emph{Fellow, IEEE}
 \par}

\vspace{0.8em}

\end{center}
\end{@twocolumnfalse}
]
\makeatother

%
%
%
%

{\centering
\textsc{Appendices}
\par}
\renewcommand{\thesection}{\Alph{section}}
\renewcommand{\thesubsection}{\thesection.\arabic{subsection}}
\vspace{0.4em}

\noindent\textbf{Appendix Overview.}
This appendix provides supplementary details for the proposed \textbf{CoEvoAttack} framework. Section~\ref{supp_sec:positioning} clarifies the positioning of our method relative to existing task-specific and gradient-based attacks, highlighting the difference between single-trajectory optimization and our cross-modal co-evolutionary search. Section~\ref{supp_sec:dataset} introduces the CrossVLAD dataset configuration and the evaluated object-change tasks. Section~\ref{supp_sec:experiment} describes the experimental settings, implementation details, and evaluation protocol used for cross-task attack assessment. Section~\ref{supp_sec:additional_results} reports additional numerical results, including the task-specific TLM comparison and the detailed perturbation-budget analysis corresponding to Fig.~\ref{fig:parameter}. Section~\ref{supp_sec:symbols} summarizes the key notation used in our method. Finally, the theoretical analysis explains how object-region localization is implemented through region-token projection and mask-constrained perturbation optimization.

\section{Positioning of Our Work}
\label{supp_sec:positioning}

Existing adversarial attacks on vision-language models usually optimize a single adversarial candidate along a task-specific gradient trajectory~\citesupp{sup_zhang2022towards,sup_zhao2023evaluating}, following the common first-order adversarial optimization paradigm established by PGD-style attacks~\citesupp{madry2017towards}. While effective for a particular task or prompt, such optimization may overfit to task-dependent local directions and provide limited cross-task transferability. In contrast, \textbf{CoEvoAttack} formulates the attack as a dual-sided evolutionary search problem. On the text side, it evolves hard negative semantic embeddings around the source-category representation to provide stronger cross-modal repulsion. On the image side, it maintains a population of object-region perturbations and combines momentum-based first-order updates with selection, mutation, and crossover. This design enables the attack to explore multiple feasible adversarial candidates rather than relying on a single gradient path.

From a methodological perspective, the closest line of work is targeted adversarial attacking for unified vision-language models. However, most prior methods mainly search in the visual perturbation space or optimize task-specific objectives. Our method instead performs co-evolutionary search in both the semantic guidance space and the image perturbation space. This allows the generated adversarial image to induce a consistent semantic shift from the source object to the target category across multiple downstream tasks.

\section{Dataset Configurations}
\label{supp_sec:dataset}

We conduct experiments on the CrossVLAD benchmark~\citesupp{sup_zhao2025one}, 
which is specifically designed for evaluating cross-task adversarial attacks 
on unified vision-language models. CrossVLAD is constructed from the 
MSCOCO train2017 dataset~\citesupp{sup_lin2014microsoft} and contains 3,000 carefully selected images 
with 79 object-change pairs across 10 semantic categories, including 
vehicle, outdoor, animal, accessory, sports, kitchen, food, furniture, 
electronic, and appliance. Each sample is associated with a source object 
category and a semantically reasonable target category, enabling targeted 
object-change evaluation across multiple vision-language tasks. Following 
the original benchmark setting, we evaluate attacks on four representative 
tasks: image captioning, object detection, region categorization, and object 
localization. The dataset construction follows strict filtering criteria, 
including object size constraints, limited object instances, category 
uniqueness, caption verification, and exclusion of images containing the 
target category. Original MSCOCO annotations are preserved, while 
GPT-4-assisted annotations are used to generate target-category captions, 
ensuring that the generated descriptions explicitly mention the target 
category while excluding the source category.


\section{Experiment Details}
\label{supp_sec:experiment}

\noindent\textbf{Implementation Details.}
In our experiments, we use Florence-2-Large~\citesupp{sup_xiao2024florence} as the target unified
vision-language model. We generate adversarial examples under an
$\ell_{\infty}$ constraint with $\epsilon = 32/255$ and optimize the
perturbations for 100 iterations with a step size of $\alpha = 8/255$.
Perturbations are restricted to the object mask. For the evolved negative
prompting, we use a text-side negative population of 9 with a mutation
factor of $\phi_t = 0.1$. On the image side, we maintain a perturbation
population of 9 with $\phi_i = 0.1$, and update each candidate using a
momentum-based iterative rule~\citesupp{dong2018boosting} together with selection, mutation, and crossover. The negative similarity weight is set to 0.2, and the negative
population is evolved for 20 steps with an $\ell_2$ radius of 0.05. We
evaluate attack performance on four tasks, including image captioning,
object detection, region categorization, and object localization. Following
the benchmark protocol, we report CTSR-4 and CTSR-3, which measure the
proportion of samples that successfully attack all four tasks or at least
three tasks, respectively, and also report the average success rate across
the four tasks. All experiments are conducted on a dual-vGPU-48GB platform.

Unless otherwise specified, we use Florence-2-Large as the main target model and evaluate transferability on OFA-Large and UnifiedIO-2-Large. The adversarial perturbation is optimized under an $\ell_{\infty}$ constraint with perturbation budget $\epsilon=32/255$. Perturbations are restricted to the object region using the binary mask $M$. The text-side population size and image-side population size are both set to $9$ by default. The text-side evolution searches for hard negative embeddings within an $L_2$ ball centered at the source-category embedding, while the image-side evolution maintains multiple perturbation candidates and updates them with momentum-guided optimization together with mutation and crossover.

\noindent\textbf{Evaluation Protocol.}
We follow the CrossVLAD evaluation setting for cross-task object-change attacks. Each adversarial example is evaluated on four complementary vision-language tasks: image captioning (IC), object detection (OD), region categorization (RC), and object localization (OL). We evaluate our method on three representative unified vision-language models, including Florence-2-Large~\citesupp{sup_xiao2024florence},
UnifiedIO-2-Large~\citesupp{sup_lu2024unified}, and
OFA-Large~\citesupp{sup_wang2022ofa}. All models are used with their original pre-trained weights without any task-specific fine-tuning, allowing us to assess the impact of adversarial attacks on pre-trained unified models. Besides task-wise attack success rates, we report the average success rate, CTSR-4, and CTSR-3. CTSR-4 measures the proportion of samples that successfully attack all four tasks simultaneously, while CTSR-3 measures the proportion of samples that successfully attack at least three tasks. These metrics are stricter than single-task attack success rates because they evaluate whether the perturbation consistently affects the shared object-level semantic representation across tasks.

\begin{table}[!t]
\centering
\caption{Performance comparison of CTSR-4 between task-specific Training Loss Minimization (TLM) attacks and our method. TLM-IC, TLM-OD, TLM-RC, and TLM-OL denote attacks optimized for Image Captioning, Object Detection, Region Categorization, and Object Localization, respectively. Bold values indicate the best result in each column.}
\small
\setlength{\tabcolsep}{4pt}
\renewcommand{\arraystretch}{1.05}
\resizebox{0.96\columnwidth}{!}{
\begin{tabular}{lccccc}
\toprule
\multirow{2}{*}{Method} & \multicolumn{4}{c}{Tasks} & Metric \\
\cmidrule(lr){2-5} \cmidrule(lr){6-6}
& IC & OD & RC & OL & CTSR-4 \\
\midrule
TLM-IC & \textbf{0.935} & 0.241 & 0.346 & 0.296 & 0.213 \\
TLM-OD & 0.451 & \textbf{0.827} & 0.683 & 0.518 & 0.316 \\
TLM-RC & 0.523 & 0.548 & 0.703 & 0.724 & 0.420 \\
TLM-OL & 0.244 & 0.257 & 0.277 & \textbf{0.736} & 0.153 \\
\rowcolor{green!10}
CoEvoAttack   & 0.707 & 0.611 & \textbf{0.846} & 0.628 & \textbf{0.623} \\
\bottomrule
\end{tabular}
}
\label{tab:tlm_compare}
\end{table}

\begin{table}[t]
\centering
\caption{Effect of perturbation budget $\epsilon$ on task-wise and overall performance.
The attack step size is set as $\alpha=\epsilon/4$.
The selected setting $\epsilon=32/255$ is highlighted.}
\small
\setlength{\tabcolsep}{4pt}
\renewcommand{\arraystretch}{1.15}
\resizebox{\columnwidth}{!}{
\begin{tabular}{lccccccc}
\toprule
\multirow{2}{*}{Perturbation} 
& \multicolumn{4}{c}{Task Metrics} 
& \multicolumn{3}{c}{Overall Metrics} \\ 
\cmidrule(lr){2-5} \cmidrule(lr){6-8}
& IC & OD & RC & OL & Avg. & CTSR-4 & CTSR-3 \\
\midrule

$\epsilon=4/255$  
& 0.014 & 0.018 & 0.026 & 0.636 & 0.174 & 0.008 & 0.016 \\

$\epsilon=8/255$  
& 0.363 & 0.357 & 0.440 & 0.705 & 0.467 & 0.263 & 0.366 \\

$\epsilon=16/255$ 
& 0.630 & 0.569 & 0.742 & 0.673 & 0.654 & 0.441 & 0.591 \\

\bestcell{$\epsilon=32/255$} 
& \bestcell{\textbf{0.707}} 
& \bestcell{\textbf{0.611}} 
& \bestcell{\textbf{0.846}} 
& \bestcell{\textbf{0.628}} 
& \bestcell{\textbf{0.698}} 
& \bestcell{\textbf{0.443}} 
& \bestcell{\textbf{0.623}} \\

$\epsilon=64/255$ 
& 0.733 & 0.601 & 0.886 & 0.585 & 0.701 & 0.417 & 0.606 \\

\bottomrule
\end{tabular}
}
\label{tab:perturbation_budget}
\end{table}

\section{Additional Results}
\label{supp_sec:additional_results}

Table~\ref{tab:tlm_compare} provides the detailed numerical comparison between task-specific Training Loss Minimization (TLM) attacks and our method. The TLM variants show clear task-specific behavior: each method achieves its best performance on the task used for optimization, but the gains do not consistently transfer to other tasks. In contrast, our method obtains more balanced results across IC, OD, RC, and OL, and achieves the highest CTSR-4 score. This demonstrates that our method is more effective in generating adversarial examples that affect shared object-level semantics across tasks, rather than overfitting to a single task objective.

Table~\ref{tab:perturbation_budget} reports the detailed values corresponding to Fig.~\ref{fig:parameter}. As the perturbation budget $\epsilon$ increases, the overall attack performance improves substantially from $4/255$ to $32/255$, indicating that a larger feasible perturbation space helps the evolutionary search find stronger adversarial candidates. However, further increasing $\epsilon$ to $64/255$ does not improve the cross-task metrics, although some individual task scores become higher. Therefore, $\epsilon=32/255$ provides a better trade-off between attack strength and cross-task consistency, and is adopted as the default setting in our main experiments.

\begin{table}[!t]
\centering
\small
\renewcommand{\arraystretch}{1.08}
\caption{Summary of key symbols with explanations.}
\resizebox{0.98\columnwidth}{!}{
\begin{tabular}{ll}
\toprule
Symbol & Explanation \\
\midrule
$x$ & Clean input image. \\
$M$ & Binary mask restricting perturbations to the target object region. \\
$\delta$ & Adversarial perturbation in image space. \\
$x^{adv}$ & Final adversarial image, i.e., $x^{adv}=x+\delta$. \\
$\mathcal{T}$ & Set of visual patch indices corresponding to the target object. \\
$F_v(\cdot)$ & Visual encoder that extracts patch-level visual features. \\
$z_v^{clean}$ & Object-level visual representation of the clean image. \\
$e_s^{-}$ & Source-category text embedding used as the semantic anchor. \\
$e^{+}$ & Target-category text embedding. \\
$\mathcal{E}^{-}$ & Final evolved negative text population. \\
$K_t$ & Population size for text-side evolution. \\
$K_v$ & Population size for image-side evolution. \\
$R_t$ & Radius of the text-side $L_2$ search ball centered at $e_s^{-}$. \\
$\epsilon$ & Maximum perturbation budget in image space. \\
$\beta$ & Weight of the target-similarity penalty in text-side fitness. \\
$h$ & Weight of the negative-similarity term in the image-side loss. \\
$\phi_t$ & Mutation strength for text-side evolution. \\
$\phi_v$ & Mutation strength for image-side evolution. \\
$\mu$ & Momentum decay factor in image-side optimization. \\
$\alpha_t$ & Cosine-annealed step size at iteration $t$. \\
\bottomrule
\end{tabular}
}
\label{tab:key_symbols}
\vspace{-0.15cm}
\end{table}

\begin{figure*}[!t]
    \centering

    \subfloat[Results for umbrella$\rightarrow$handbag, cat$\rightarrow$dog, tie$\rightarrow$handbag, and train$\rightarrow$airplane.]{
        \includegraphics[
            width=0.98\textwidth,
            height=0.38\textheight,
            keepaspectratio
        ]{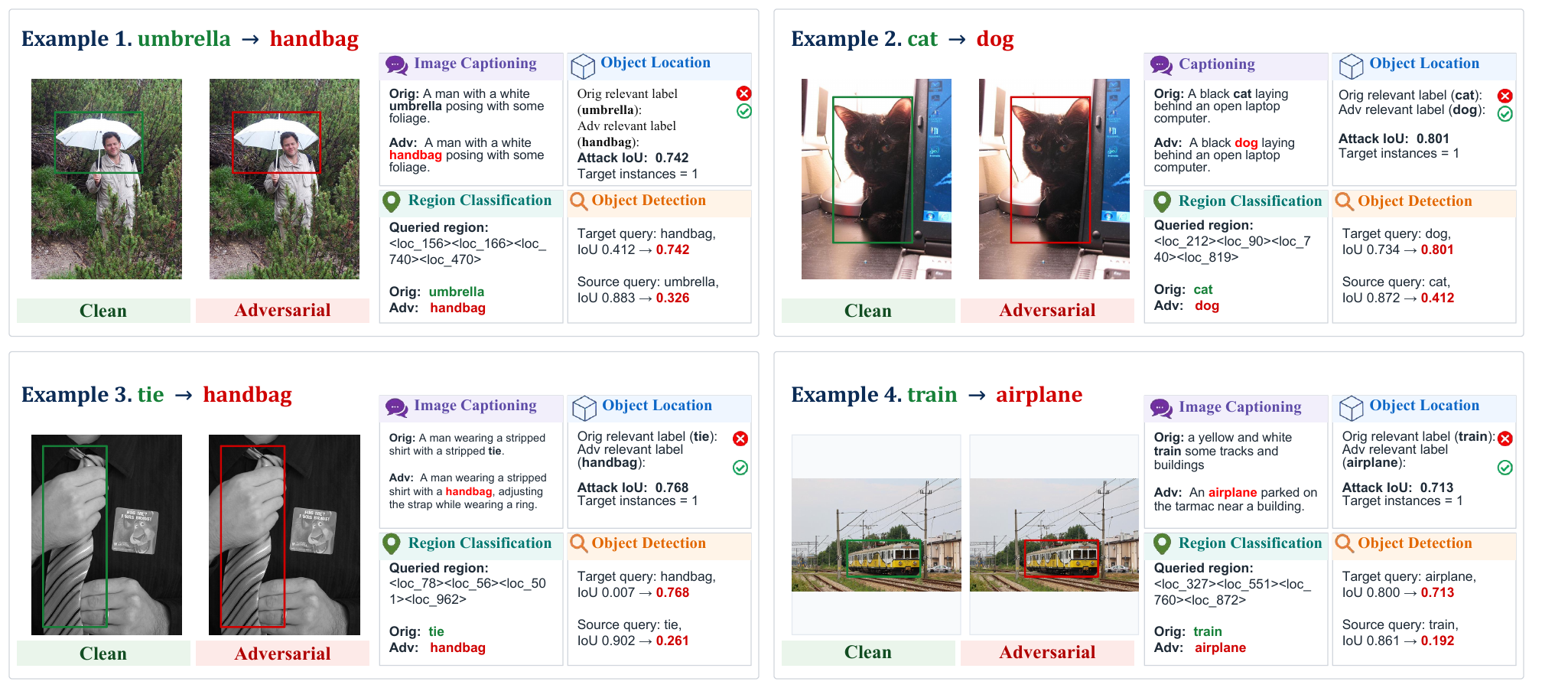}
        \label{fig:supp_example1}
    }

    \vspace{0.1cm}

    \subfloat[Results for tie$\rightarrow$handbag, cat$\rightarrow$dog, book$\rightarrow$clock, and tv$\rightarrow$laptop.]{
        \includegraphics[
            width=0.98\textwidth,
            height=0.38\textheight,
            keepaspectratio
        ]{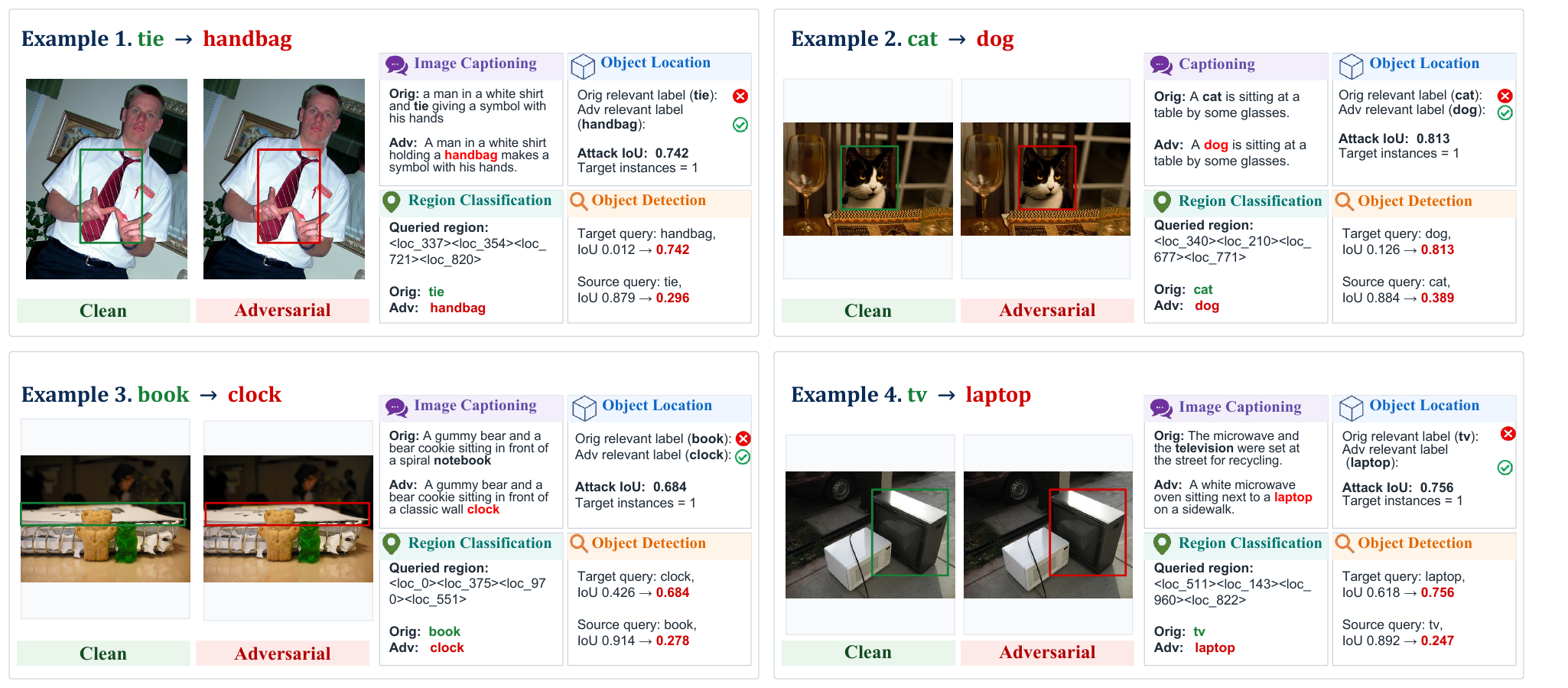}
        \label{fig:supp_example2}
    }

    \vspace{-0.15cm}
    \caption{Additional qualitative results of cross-task adversarial transfer
    under the object-change attack setting. For each source-to-target pair, we
    show the clean and adversarial images together with the corresponding
    outputs of image captioning, object localization, region categorization,
    and object detection. \colorbox{green}{Green} boxes denote source-object
    regions in clean images, while \colorbox{red}{red} boxes denote target-object
    regions associated with adversarial predictions. Green check marks and red
    cross marks indicate whether the corresponding labels are detected. The
    caption changes, category shifts, detection results, and IoU variations
    demonstrate that the intended semantic changes transfer consistently
    across multiple vision-language tasks.}
    \label{fig:supp_examples12}
    \vspace{-0.2cm}
\end{figure*}

\begin{figure*}[!t]
    \centering

    \subfloat[Results for bird$\rightarrow$cat, pizza$\rightarrow$donut, cup$\rightarrow$fork, and boat$\rightarrow$train.]{
        \includegraphics[
            width=0.98\textwidth,
            height=0.38\textheight,
            keepaspectratio
        ]{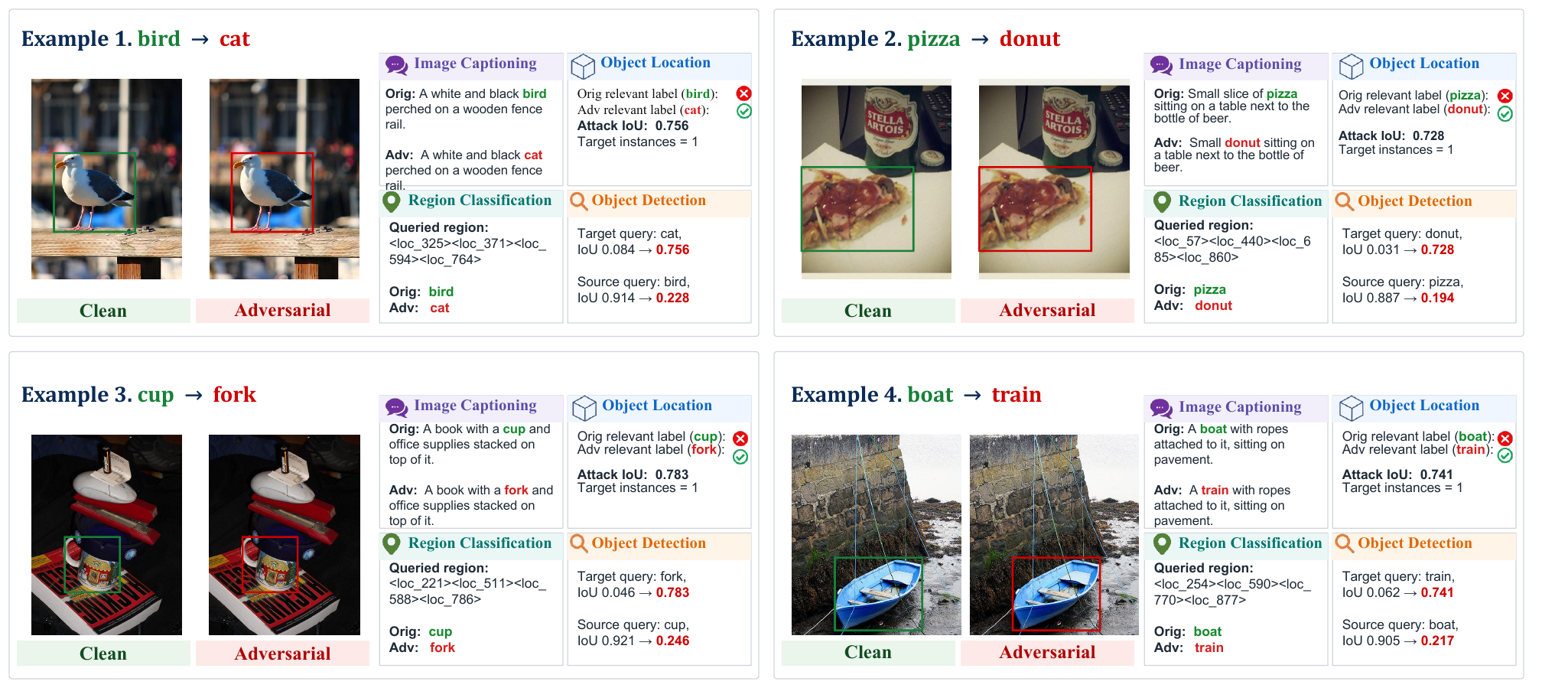}
        \label{fig:supp_example3}
    }

    \vspace{0.1cm}

    \subfloat[Results for donut$\rightarrow$cake, train$\rightarrow$airplane, skateboard$\rightarrow$surfboard, and horse$\rightarrow$sheep.]{
        \includegraphics[
            width=0.98\textwidth,
            height=0.38\textheight,
            keepaspectratio
        ]{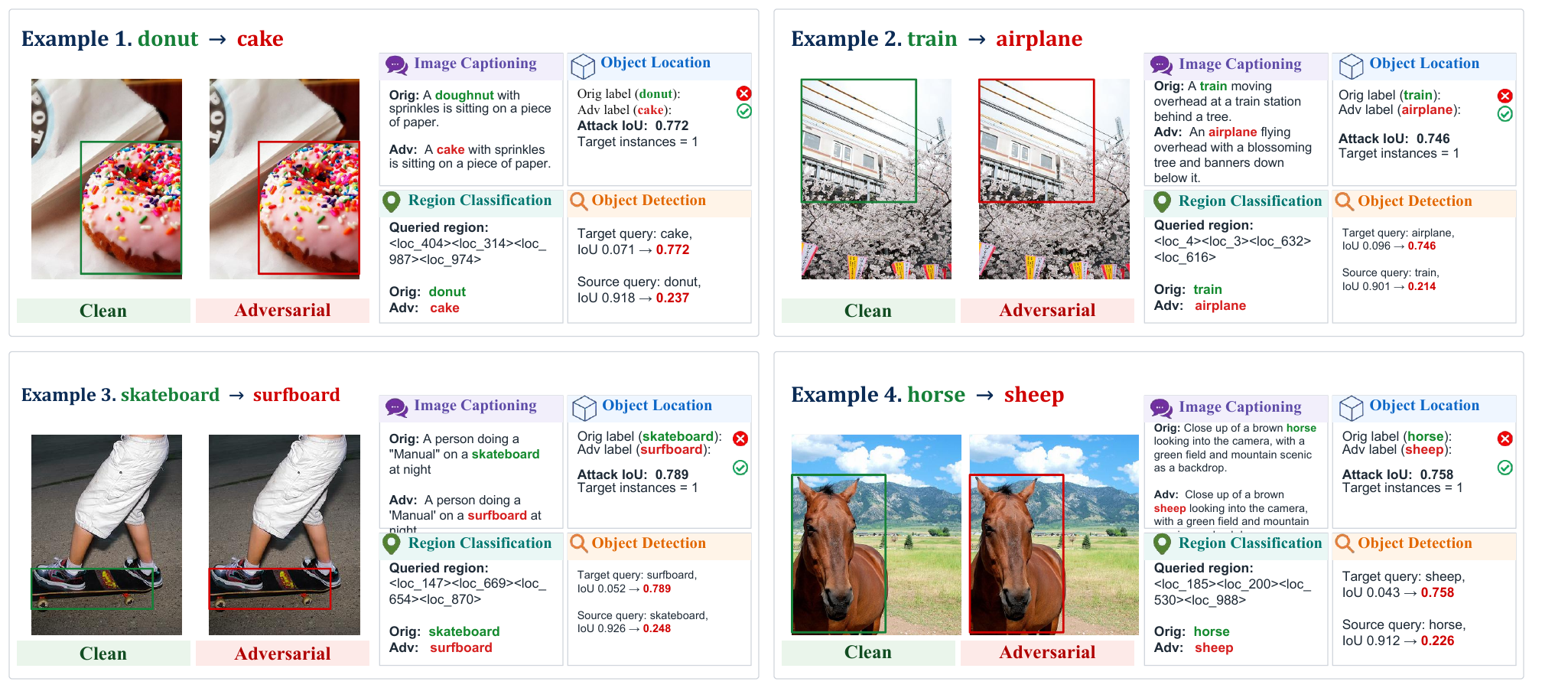}
        \label{fig:supp_example4}
    }

    \vspace{-0.15cm}
    \caption{Further qualitative results of cross-task adversarial transfer.
    The generated adversarial examples consistently suppress the source-object
    semantics and induce the target-object semantics across image captioning,
    object localization, region categorization, and object detection. The
    corresponding localization and IoU changes further confirm the
    transferability of the intended object-level semantic changes across
    different vision-language tasks.}
    \label{fig:supp_examples34}
    \vspace{-0.2cm}
\end{figure*}

\section{Summary of Symbols}
\label{supp_sec:symbols}

Table~\ref{tab:key_symbols} summarizes the key notation used in the proposed method. These symbols describe the clean and adversarial inputs, the object-region mask, the visual token set, the text-side semantic embeddings, and the main hyperparameters used in the dual-sided evolutionary optimization. This notation also clarifies the connection between the localized image perturbation and the cross-modal semantic guidance: the adversarial image is obtained by applying a bounded perturbation to the object region, while the corresponding visual tokens are used to compute object-level representations for text-side and image-side optimization.

\noindent\textbf{Limitations.}
Although \textbf{\textit{CoEvoAttack}} demonstrates strong cross-task attack performance on unified vision-language models, several limitations remain. First, our experiments are mainly conducted on three representative unified VLMs, including Florence-2, OFA, and UnifiedIO-2, and evaluated on the CrossVLAD benchmark with four object-centric tasks. While these settings provide a meaningful testbed for cross-task semantic manipulation, they do not fully cover all multimodal model families, closed-source VLMs, or more complex reasoning-oriented tasks such as visual question answering, OCR-based understanding, and embodied decision-making. Second, our method relies on source-object localization to restrict the perturbation to the object region. This design helps induce object-level semantic shifts while avoiding unnecessary whole-image perturbations, but the attack performance may be affected when the source object is small, heavily occluded, inaccurately localized, or visually entangled with surrounding objects. Third, the present work mainly focuses on digital-domain adversarial attacks under an $\ell_\infty$ perturbation constraint. Although the perturbations are bounded and localized, we have not fully evaluated human perceptual quality or physical-world robustness under printing, recapturing, lighting changes, and viewpoint variations. Finally, despite the improved search ability of the proposed co-evolutionary optimization, maintaining text-side and image-side candidate populations still introduces additional computational cost for large VLMs or resource-constrained scenarios. We view these limitations as promising directions for future research on more general, efficient, and physically robust cross-modal adversarial evaluation.

\vspace{0.1cm}
\noindent\textbf{Social Impact.}
This work studies cross-task adversarial vulnerabilities of unified vision-language models. As VLMs are increasingly used in applications such as autonomous perception, robotics, content moderation, assistive systems, medical decision support, and multimodal human--AI interaction, understanding their failure modes is important for improving reliability and trustworthiness. By showing that localized object-level perturbations can induce consistent semantic shifts across multiple vision-language tasks, our study provides useful evidence for robustness evaluation and may help motivate stronger defense mechanisms, safer benchmark protocols, and more reliable multimodal model architectures. At the same time, adversarial attack research is inherently dual-use: the proposed method could potentially be misused to manipulate the outputs of deployed VLMs, especially in safety-critical scenarios where incorrect object recognition or localization may lead to harmful decisions. For this reason, our experiments are conducted under controlled benchmark settings and are intended to reveal systematic weaknesses rather than facilitate real-world misuse. Moreover, the vulnerability exposed in this work should not be interpreted as a complete assessment of VLM safety, since trustworthy multimodal AI also involves fairness, privacy, calibration, interpretability, and responsible deployment. We hope this work contributes to a deeper security understanding of multimodal foundation models and encourages the development of more robust and responsibly deployed vision-language systems.

\section{More Visualization Results}
\label{supp_sec:visualization}

In this section, we provide additional qualitative examples to further demonstrate the cross-task transferability of the adversarial examples generated by our method. Unlike task-specific attacks that mainly affect a single output format, our method aims to induce an object-level semantic shift that can be consistently reflected across heterogeneous vision-language tasks. Therefore, for each source-to-target object pair, we present the clean and adversarial images together with the corresponding outputs of image captioning (IC), object localization (OL), region categorization (RC), and object detection (OD).

As shown in Fig.~\ref{fig:supp_example1}, the adversarial examples produce consistent semantic changes across the four evaluated tasks. For example, the source objects \emph{umbrella}, \emph{cat}, \emph{tie}, and \emph{train} are shifted toward the target categories \emph{handbag}, \emph{dog}, \emph{handbag}, and \emph{airplane}, respectively. After the attack, the generated captions explicitly contain the target categories, while the region categorization outputs change from the source labels to the corresponding target labels. Meanwhile, the localization and detection results exhibit increased target-query IoUs and decreased source-query IoUs, indicating that the target semantics become more dominant while the original source semantics are suppressed.

Fig.~\ref{fig:supp_example2} presents further examples involving different visual contexts and object categories, including \emph{tie} $\rightarrow$ \emph{handbag}, \emph{cat} $\rightarrow$ \emph{dog}, \emph{book} $\rightarrow$ \emph{clock}, and \emph{tv} $\rightarrow$ \emph{laptop}. Despite variations in object scale, background clutter, and scene composition, the generated perturbations still induce coherent target-oriented responses across IC, OL, RC, and OD. These results suggest that the attack does not merely exploit isolated task heads or specific output templates, but instead affects the shared object-level cross-modal semantics within unified vision-language models.

Figs.~\ref{fig:supp_example3} and~\ref{fig:supp_example4} provide an expanded set of source-to-target transformations under more diverse object categories and scene configurations. Across these examples, the intended target semantics are consistently reflected in the generated captions and region-level category predictions. The localization and detection outputs also respond more strongly to the target queries, while the responses associated with the original source objects are weakened or removed. The consistency of these changes across different object positions, scales, and surrounding contexts indicates that the observed cross-task transferability is not restricted to a small number of category pairs or particular visual scenes.

Overall, the supplementary visualizations in Figs.~\ref{fig:supp_example1}--\ref{fig:supp_example4} provide intuitive evidence for the cross-task behavior reported in the main experiments. The same adversarial image can simultaneously affect free-form caption generation, region-level recognition, open-vocabulary localization, and object detection. These results further demonstrate that our method generates transferable object-change adversarial examples by manipulating shared object-level semantics rather than overfitting to an individual task objective.

{
\small
\bibliographystylesupp{IEEEtran}
\bibliographysupp{supp}
}

\end{document}